\newcommand{\mg}[1]{{\color{magenta}#1}}
\definecolor{sylvain}{rgb}{0,.6,0}
\definecolor{orange}{rgb}{1,0.5,0}
\definecolor{darkgreen}{rgb}{0,0.4,0}
\def\bydef{:=}
\def\X{\mathcal{X}}
\def\Z{\mathcal{Z}}
\def\R{\mathcal{R}}
\def\GAN{\mathrm{GAN}}
\def\fGAN{\mathrm{f,GAN}}
\def\KL{\mathrm{KL}}
\def\JS{\mathrm{JS}}
\def\VAE{\mathrm{VAE}}
\def\POT{\mathrm{WAE}}
\def\AAE{\mathrm{AAE}}
\def\WGAN{\mathrm{WGAN}}
\def\AVB{\mathrm{AVB}}
\newcommand{\oo}[1]{\text{\sf 1}_{[#1]}}
\def\E{\mathbb{E}}
\newcommand{\e}[1]{\mathbb{E}\left[#1 \right]}
\newcommand{\ee}[2]{\mathbb{E}_{#1}\left[#2 \right]}
\newtheorem{theorem}{Theorem}
\newtheorem{lemma}[theorem]{Lemma}
\newtheorem{corollary}[theorem]{Corollary}
\theoremstyle{remark}
\theoremstyle{example}
\theoremstyle{definition}
\theoremstyle{remark}
\newcommand\independent{\protect\mathpalette{\protect\independenT}{\perp}}
\def\independenT#1#2{\mathrel{\rlap{$#1#2$}\mkern2mu{#1#2}}}
\title{Wasserstein Auto-Encoders}
\author[1]{Ilya Tolstikhin}
\author[2]{Olivier Bousquet}
\author[2]{Sylvain Gelly}
\author[1]{Bernhard Sch\"olkopf}
\affil[1]{Max Planck Institute for Intelligent Systems}
\affil[2]{Google Brain}
\date{}
\begin{document}

\maketitle

\begin{abstract}
We propose the Wasserstein Auto-Encoder (WAE)---a new algorithm for building a generative model of the data
    distribution. WAE minimizes a penalized form of the Wasserstein
    distance between the model distribution and the target distribution,
    which leads to a different regularizer than the one used by the Variational Auto-Encoder (VAE)~\cite{KW14}.
    This regularizer encourages the encoded training distribution to match the prior.
    We compare our algorithm with several other techniques and show that it is a
    generalization of adversarial auto-encoders (AAE) \cite{MSJ+16}. 
    Our experiments show that WAE shares many of the properties of VAEs (stable training,
    encoder-decoder architecture, nice latent manifold structure)
    while generating samples of better quality, as measured by the FID score.
\end{abstract}
\section{Introduction}

The field of representation learning was initially driven by supervised approaches, with impressive results using large labelled datasets. Unsupervised generative modeling, in contrast, used to be a domain governed by probabilistic approaches focusing on low-dimensional data. 
Recent years have seen a convergence of those two approaches. In the new field that formed at the intersection, variational auto-encoders (VAEs) \cite{KW14} constitute one well-established approach, theoretically elegant yet with the drawback that they tend to generate blurry samples when applied to natural images. In contrast, generative adversarial networks (GANs) \cite{goodfellow2014generative} turned out to be more impressive in terms of the visual quality of images sampled from the model, but come without an encoder, have been reported harder to train, and suffer from the ``mode collapse'' problem where the resulting model is unable to capture all the variability in the true data distribution. There has been a flurry of activity in assaying numerous configurations of GANs as well as combinations of VAEs and GANs. 
A unifying framework combining the best of GANs and VAEs in a principled way is yet to be discovered. 

This work builds up on the theoretical analysis presented in \cite{BGT+17}.
Following \cite{AB17, BGT+17}, we approach generative modeling from the optimal transport (OT) point of view. 
The OT cost \cite{V03} is a way to measure a distance between probability distributions and provides a much weaker topology than many others, including $f$-divergences associated with the original GAN algorithms \cite{nowozin2016f}.
This is particularly important in applications, where data is usually supported on low dimensional manifolds in the input space~$\X$.
As a result, stronger notions of distances (such as $f$-divergences, which capture the density ratio between distributions) often max out, providing no useful gradients for training.
In contrast, OT was claimed to have a nicer behaviour \cite{AB17, GAA+17} although it requires, in its GAN-like implementation, the addition of a constraint or a regularization term into the objective.

In this work we aim at minimizing OT $W_c(P_X,P_G)$ between the true (but unknown) data distribution $P_X$ and a \emph{latent variable model} $P_G$ specified by the prior distribution $P_Z$ of latent codes $Z\in\Z$ and the generative model $P_G(X|Z)$ of the data points $X\in\X$ given $Z$.
Our main contributions are listed below (cf.\ also Figure \ref{fig:main}):
\begin{itemize}
\item
A new family of regularized auto-encoders (Algorithms \ref{alg:WAE-GAN}, \ref{alg:WAE-MMD} and Eq.\,\ref{eq:wae-obj}), which we call \emph{Wasserstein Auto-Encoders} (WAE),  that minimize the optimal transport $W_c(P_X,P_G)$ for any cost function~$c$. 
Similarly to VAE, the objective of WAE is composed of two terms: the $c$-reconstruction cost 
and a regularizer $\mathcal{D}_Z(P_Z, Q_Z)$ penalizing a discrepancy between two distributions in $\Z$: $P_Z$ and a distribution of encoded data points, i.e. $Q_Z:=\E_{P_X}[Q(Z|X)]$.
When $c$ is the squared cost and $\mathcal{D}_Z$ is the GAN objective, WAE coincides with adversarial auto-encoders of \cite{MSJ+16}.
\item
Empirical evaluation of WAE on MNIST and CelebA datasets with squared cost $c(x,y)=\|x-y\|^2_2$. 
Our experiments show that WAE keeps the good properties of VAEs (stable training, encoder-decoder architecture, and a nice latent manifold structure) while generating samples of \emph{better quality}, approaching those of GANs.
\item
We propose and examine two different regularizers $\mathcal{D}_Z(P_Z, Q_Z)$. 
One is based on GANs and adversarial training \emph{in the latent space} $\Z$. 
The other uses the maximum mean discrepancy, which is known to perform well when matching high-dimensional standard normal distributions $P_Z$ \cite{Gretton12}.
Importantly, the second option leads to a fully adversary-free min-min optimization problem.
\item
Finally, the theoretical considerations presented in \cite{BGT+17} and used here to derive the WAE objective
might be interesting in their own right. 
In particular, Theorem~\ref{thm:main} shows that in the case of generative models, \emph{the primal form} of $W_c(P_X, P_G)$ is equivalent to a problem involving
the optimization of a probabilistic encoder $Q(Z|X)$ .
\end{itemize}

\begin{figure}[t]
\hspace{0.01\linewidth}
\begin{subfigure}[t]{0.45\linewidth}
    \caption{VAE}
    \begin{tikzpicture}
            \node[anchor=south west,inner sep=0] at (0,0) {\includegraphics[width=1.0\linewidth]{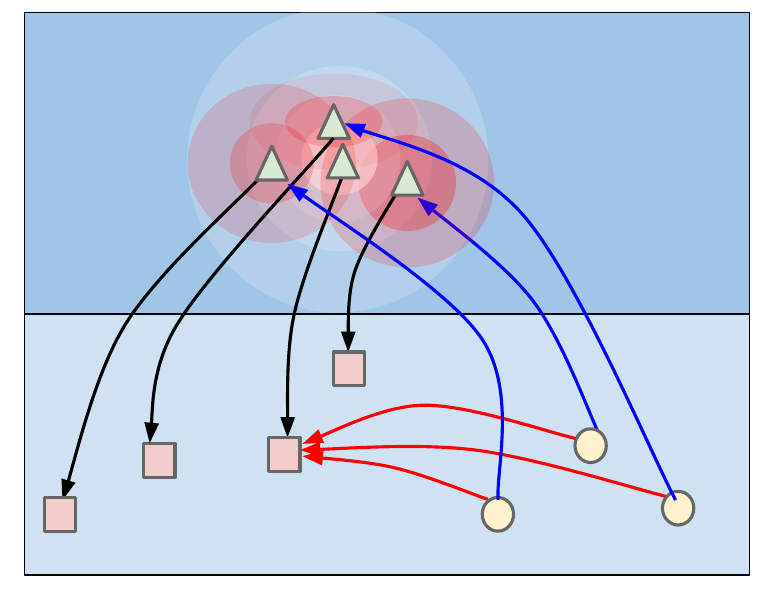}};
            \node[text width=2cm] at (6.4, 4.2) {$\Z$};
            \node[text width=2cm] at (6.4, 2) {$\X$};
            \node[text width=2cm] at (5.1, 3.4) {\textcolor{blue}{$Q_{\mathrm{VAE}}(Z|X)$}};
            \node[text width=2cm] at (1.25, 3.3) {$P_G(X|Z)$};
            \node[text width=5cm] at (3.3, .5) {\textcolor{red}{VAE reconstruction}};
            \node[text width=5cm] at (5, 4.4) {$P_Z$};
        \end{tikzpicture}
  \end{subfigure}
  \hspace{0.05\linewidth}
  \begin{subfigure}[t]{0.45\linewidth}
    \caption{WAE}
    \begin{tikzpicture}
            \node[anchor=south west,inner sep=0] at (0,0) {\includegraphics[width=1.0\linewidth]{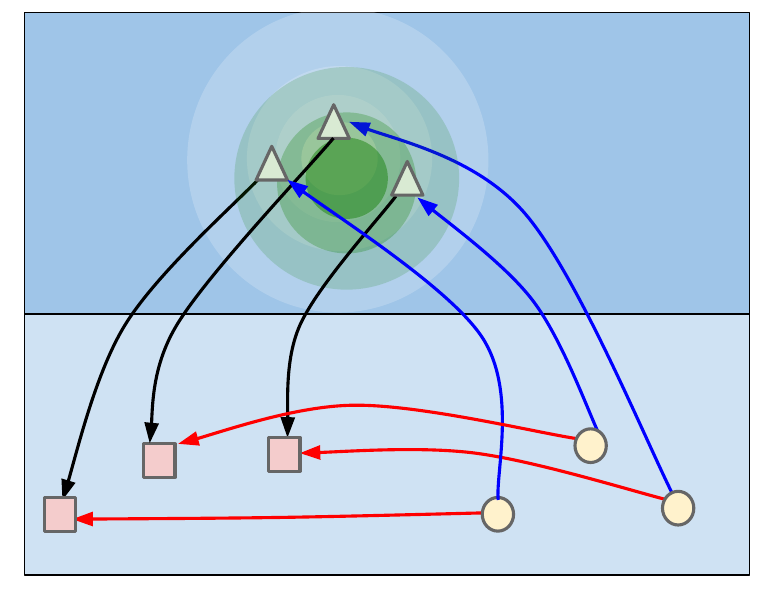}};
            \node[text width=2cm] at (6.4, 4.2) {$\Z$};
            \node[text width=2cm] at (6.4, 2) {$\X$};
            \node[text width=2cm] at (5.2, 3.4) {\textcolor{blue}{$Q_{\mathrm{WAE}}(Z|X)$}};
            \node[text width=2cm] at (1.25, 3.3) {$P_G(X|Z)$};
            \node[text width=5cm] at (3.3, .4) {\textcolor{red}{WAE reconstruction}};
            \node[text width=5cm] at (4.5, 4.4) {$P_Z$};
            \node[text width=5cm] at (5.7, 4.3) {\textcolor{darkgreen}{$Q_Z$}};
        \end{tikzpicture}
  \end{subfigure}

  \caption{Both VAE and WAE minimize two terms: the reconstruction cost and the regularizer penalizing discrepancy between $P_Z$ and distribution induced by the encoder $Q$. VAE forces $Q(Z|X=x)$ to match $P_Z$ for all the different input examples $x$ drawn from $P_X$. This is illustrated on picture (a), where every single red ball is forced to match $P_Z$ depicted as the white shape.
Red balls start intersecting, which leads to problems with reconstruction. In contrast, WAE forces the continuous mixture $Q_Z:=\int Q(Z|X) dP_X$ to match $P_Z$, as depicted with the green ball in picture (b). As a result latent codes of different examples get a chance to stay far away from each other, promoting a better reconstruction.}
  \label{fig:main}
\end{figure}

The paper is structured as follows.
In Section \ref{sec:newresults} we review a novel auto-encoder formulation
for OT between $P_X$ and the latent variable model $P_G$ derived in \cite{BGT+17}.
Relaxing the resulting constrained optimization problem we arrive at an objective of Wasserstein auto-encoders.
We propose two different regularizers, leading to WAE-GAN and WAE-MMD algorithms.
Section \ref{sec:relatedwork} discusses the related work.
We present the experimental results in Section \ref{sec:experiments} and conclude by pointing out some promising directions for future work.

\section{Proposed method}
\label{sec:newresults}

Our new method minimizes the optimal transport cost $W_c(P_X, P_G)$ based on the novel auto-encoder formulation (see Theorem \ref{thm:main} below).
In the resulting optimization problem the decoder tries to accurately reconstruct the encoded training examples as measured by the cost function $c$.
The encoder tries to simultaneously achieve two conflicting goals: it tries to match the encoded distribution of training examples $Q_Z:=\E_{P_X}[Q(Z|X)]$ to the prior $P_Z$ as measured by any specified divergence $\mathcal{D}_Z(Q_Z, P_Z)$, while making sure that the latent codes provided to the decoder are informative enough to reconstruct the encoded training examples.
This is schematically depicted on Fig. \ref{fig:main}.

\subsection{Preliminaries and notations}
\label{sec:preliminary}
We use calligraphic letters (i.e.\:$\X$) for sets,
capital letters (i.e.\:$X$) for random variables,
and lower case letters (i.e.\:$x$) for their values.
We denote probability distributions with capital letters (i.e.\:$P(X)$) and corresponding densities 
with lower case letters (i.e.\:$p(x)$).
In this work we will consider several measures of discrepancy between probability distributions $P_X$ and $P_G$.
The class of \emph{$f$-divergences}~\cite{LM08} is defined by $D_f(P_X\|P_G):=\int f\bigl(\frac{p_X(x)}{p_G(x)}\bigr)p_G(x)dx$, where $f\colon (0,\infty)\to \R$ is any convex function satisfying $f(1)=0$. 
Classical examples include the Kullback-Leibler $D_\KL$ and Jensen-Shannon $D_\JS$ divergences.

\subsection{Optimal transport and its dual formulations}
\label{sec:otanddual}
A rich class of divergences between probability distributions is induced by the \emph{optimal transport}~(OT) problem \cite{V03}.
Kantorovich's formulation of the problem is given by
\begin{equation}
\label{eq:ot}
W_c(P_X,P_G):=\inf_{\Gamma\in \mathcal{P}(X\sim P_X,Y\sim P_G)} \E_{(X,Y)\sim\Gamma}[c(X,Y)]\,,
\end{equation}
where $c(x,y)\colon \X \times \X \to \R_+$ is any measurable \emph{cost function}
and $\mathcal{P}(X\sim P_X,Y\sim P_G)$ is a set of all joint distributions of $(X,Y)$ with marginals $P_X$ and $P_G$ respectively.
A particularly interesting case is when $(\X,d)$ is a metric space and $c(x,y) = d^p(x,y)$ for $p\geq 1$.
In this case $W_{p}$, the $p$-th root of~$W_c$, is called \emph{the $p$-Wasserstein distance}.

When $c(x,y)=d(x,y)$ the following Kantorovich-Rubinstein duality holds\footnote{
Note that the same symbol is used for $W_p$ and $W_c$, but only $p$ is a number and thus the above $W_1$ refers to the 1-Wasserstein distance.
}:
\begin{equation}
\label{eq:KRD}
W_{1}(P_X,P_G) = \sup_{f\in \mathcal{F}_L} \E_{X\sim P_X}[f(X)] - \E_{Y \sim P_G}[ f(Y)] ,
\end{equation}
where $\mathcal{F}_L$ is the class of all bounded 1-Lipschitz functions on $(\X,d)$. 

\subsection{Application to generative models: Wasserstein auto-encoders}
\label{subsec:generativemodels}

One way to look at modern generative models like VAEs and GANs is to postulate that they are
trying to minimize certain discrepancy measures between the data distribution $P_X$ and the model $P_G$.
Unfortunately, most of the standard divergences known in the literature, including those listed above, are hard or even impossible to compute,
especially when $P_X$ is unknown and $P_G$ is parametrized by deep neural networks.
Previous research provides several tricks to address this issue.

In case of minimizing the KL-divergence $D_\KL(P_X, P_G)$, or equivalently maximizing the marginal log-likelihood $E_{P_X}[\log p_G(X)]$, the famous \emph{variational lower bound} provides a theoretically grounded framework successfully employed by VAEs \cite{KW14, MNG17}.
More generally, if the goal is to minimize the $f$-divergence $D_f(P_X, P_G)$ (with one example being $D_\KL$), one can resort to its dual formulation and make use of $f$-GANs and \emph{the adversarial training} \cite{nowozin2016f}.
Finally, OT cost $W_c(P_X,P_G)$ is yet another option, which can be, thanks to the celebrated Kantorovich-Rubinstein duality \eqref{eq:KRD},
expressed as an adversarial objective as implemented by the 
Wasserstein-GAN \cite{AB17}.
We include an extended review of all these methods in Supplementary \ref{appendix:vae+gan}.

In this work we will focus on \emph{latent variable models} $P_G$ defined by a two-step procedure, where first a code $Z$ is sampled from a fixed distribution $P_Z$ on a latent space $\Z$ and then $Z$ is mapped to the image $X\in\X=\R^d$ with a (possibly random) transformation.
This results in a density of the form
\begin{equation}
\label{eq:latent-var}
p_G(x):= \int_{\Z} p_G(x|z)p_z(z) d z,\quad \forall x\in\X,
\end{equation}
assuming all involved densities are properly defined. 
For simplicity we will focus on non-random decoders, i.e.\:generative models $P_G(X|Z)$ deterministically mapping $Z$ to $X = G(Z)$ for a given map $G\colon \Z\to \X$. 
Similar results for random decoders can be found in Supplementary \ref{appendix:random-decoders}.

It turns out that under this model, the OT cost takes a simpler form as the transportation plan factors through the map $G$: instead of finding a coupling $\Gamma$ in \eqref{eq:ot} between two random variables living in the $\X$ space, one distributed according to $P_X$ and the other one
according to $P_G$, it is sufficient to find a conditional distribution $Q(Z|X)$ such that its $Z$ marginal $Q_Z(Z)\bydef \ee{X\sim P_X}{Q(Z|X)}$
is identical to the prior distribution $P_Z$.
 This is the content of the theorem below
proved in \cite{BGT+17}.
To make this paper self contained we repeat the proof in Supplementary \ref{appendix:proofs}.

\begin{theorem}
\label{thm:main}
For $P_G$ as defined above with deterministic $P_G(X|Z)$ and any function $G\colon \Z\to\X$
\[
\inf_{\Gamma \in \mathcal{P}(X\sim P_X, Y\sim P_G)} \ee{(X,Y)\sim \Gamma} {c\bigl(X,Y\bigr)}\\
= 
\inf_{Q\colon Q_Z =P_Z} \E_{P_X} \ee{Q(Z|X)} {c\bigl(X,G(Z)\bigr)},
\]
where $Q_Z$ is the marginal distribution of $Z$ when $X\sim P_X$ and $Z\sim Q(Z|X)$.
\end{theorem}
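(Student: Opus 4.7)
The plan is to prove the identity by showing both inequalities separately. The structure of the problem is that $P_G$ is the pushforward of $P_Z$ through the deterministic map $G$, so every coupling $\Gamma$ between $P_X$ and $P_G$ should correspond to a conditional $Q(Z|X)$ satisfying the marginal constraint $Q_Z = P_Z$, and vice versa.

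For the easy direction ($\inf_\Gamma \leq \inf_Q$), I would take any $Q(Z|X)$ with $Q_Z = P_Z$ and construct an explicit coupling by sampling $X \sim P_X$, then $Z \sim Q(Z|X)$, and finally setting $Y := G(Z)$. The $X$-marginal of this construction is $P_X$ by definition, and its $Y$-marginal is the pushforward of $Q_Z = P_Z$ through $G$, which equals $P_G$. Since $Y = G(Z)$ almost surely under this joint law, one has $\E[c(X,Y)] = \E_{P_X} \E_{Q(Z|X)}[c(X,G(Z))]$, which gives the inequality.

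For the reverse direction ($\inf_\Gamma \geq \inf_Q$), the key step is a gluing argument. Given a coupling $\Gamma$ of $(X,Y)$ with marginals $P_X$ and $P_G$, I need to enrich it to a joint law on $(X,Y,Z)$ that also carries the information $Y = G(Z)$ with $Z \sim P_Z$. Concretely, the pair $(Z, G(Z))$ under $P_Z$ defines a joint law $\pi$ on $\Z \times \X$ whose $\X$-marginal is $P_G$, and $\Gamma$ has the same $\X$-marginal (on its $Y$-coordinate). By the gluing lemma (or equivalently by disintegrating $P_Z$ conditionally on $Y = G(Z)$ via $P_Z(Z \mid Y = y)$ and then setting $Q(Z|X) := \int P_Z(Z \mid Y=y)\, d\Gamma(y|X)$), I obtain a joint distribution on $(X,Y,Z)$ whose $(X,Y)$-marginal is $\Gamma$ and whose $(Y,Z)$-marginal is $\pi$. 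Defining $Q(Z|X)$ as the corresponding conditional, its $Z$-marginal is the $Z$-marginal of $\pi$, namely $P_Z$, so the constraint is satisfied. Because the constructed law forces $Y = G(Z)$ almost surely (this is built into $\pi$), we get $\E_{P_X}\E_{Q(Z|X)}[c(X,G(Z))] = \E_\Gamma[c(X,Y)]$, which yields the inequality.

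The main obstacle is the measure-theoretic construction of the conditional $P_Z(Z \mid Y = y)$, i.e.\ the disintegration of $P_Z$ along the deterministic map $G$, and the verification that the gluing produces a valid joint probability measure under mild regularity assumptions on $\X$ and $\Z$ (e.g.\ Polish spaces). Everything else is a direct computation of marginals and use of the tower property of conditional expectations; the non-trivial content is packaged into the existence of the regular conditional distribution $P_Z(\cdot \mid Y=y)$, which is standard but is the one place where the argument is not purely algebraic.
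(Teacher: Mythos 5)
Your proof is correct and follows essentially the same route as the paper: both directions factor the coupling $\Gamma$ through a joint law on $(X,Y,Z)$ with $Y=G(Z)$, which is exactly the content of the paper's Lemma~\ref{lemma:main} and the subsequent marginalization over $\mathcal{P}_{X,Z}$. If anything, your treatment is slightly more explicit than the paper's, since you spell out the gluing/disintegration of $P_Z$ along $G$ (via the regular conditional $P_Z(\cdot\mid Y=y)$ on Polish spaces) that the paper leaves implicit when asserting $\mathcal{P}_{X,Y}=\mathcal{P}(P_X,P_G)$.
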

This result allows us to optimize over random encoders $Q(Z|X)$ instead of optimizing over all couplings between $X$ and $Y$.
Of course, both problems are still constrained. 
In order to implement a numerical solution we relax the constraints on $Q_Z$ by adding a penalty to the objective.
This finally leads us to the WAE objective:
\begin{equation}
\label{eq:wae-obj}
D_\POT(P_X,P_G):=
\inf_{Q(Z|X)\in\mathcal{Q}} \E_{P_X} \E_{Q(Z|X)} \bigl[c\bigl(X,G(Z)\bigr)\bigr] + \lambda \cdot \mathcal{D}_Z(Q_Z, P_Z),
\end{equation}
where $\mathcal{Q}$ is any nonparametric set of probabilistic encoders, 
$\mathcal{D}_Z$ is an arbitrary divergence between $Q_Z$ and $P_Z$,
and $\lambda > 0$ is a hyperparameter.
Similarly to VAE, we propose to use deep neural networks to parametrize both encoders $Q$ and decoders $G$.
Note that as opposed to VAEs, the WAE formulation allows for non-random encoders deterministically mapping inputs to their latent codes.

We propose two different penalties $\mathcal{D}_Z(Q_Z, P_Z)$:

\medskip
{\bf GAN-based $\mathcal{D}_Z$.} The first option is to choose $\mathcal{D}_Z(Q_Z, P_Z) = D_\JS(Q_Z, P_Z)$ and use the adversarial training to estimate it. Specifically, we introduce an adversary (discriminator) in the latent space $\Z$ trying to separate\footnote{
We noticed that the famous ``log trick'' (also called ``non saturating loss'') proposed by \cite{goodfellow2014generative} leads to better results.
} ``true'' points sampled from $P_Z$ and ``fake'' ones sampled from $Q_Z$ \cite{goodfellow2014generative}.
This results in the WAE-GAN described in Algorithm \ref{alg:WAE-GAN}.
Even though WAE-GAN falls back to the min-max problem, we move the adversary from the input (pixel) space $\X$ to the latent space~$\Z$. 
On top of that, $P_Z$ may have a nice shape with a single mode (for a Gaussian prior), in which case the task should be easier than matching an unknown, complex, and possibly multi-modal distributions as usually done in GANs. 
This is also a reason for our second penalty:

\medskip
{\bf MMD-based $\mathcal{D}_Z$.} For a positive-definite reproducing kernel $k\colon \Z\times\Z \to \R$ the following expression is called \emph{the maximum mean discrepancy} (MMD):
\[
\mathrm{MMD}_k(P_Z, Q_Z) = \bigl\| \int_{\Z} k(z, \cdot) dP_Z(z) - \int_{\Z} k(z, \cdot) dQ_Z(z)\bigr\|_{\mathcal{H}_k},
\]
where $\mathcal{H}_k$ is the RKHS of real-valued functions mapping $\Z$ to $\R$.
If $k$ is \emph{characteristic} then $\mathrm{MMD}_k$ defines a \emph{metric} and can be used as a divergence measure.
We propose to use $\mathcal{D}_Z(P_Z, Q_Z) = \mathrm{MMD}_k(P_Z, Q_Z)$.
Fortunately, MMD has an unbiased U-statistic estimator, which can be used in conjunction with stochastic gradient descent (SGD) methods.
This results in the WAE-MMD described in Algorithm \ref{alg:WAE-MMD}.
It is well known that the maximum mean discrepancy performs well when matching high-dimensional standard normal distributions \cite{Gretton12} so we expect this penalty to work especially well working with the Gaussian prior $P_Z$.

\begin{minipage}{.45\textwidth}
	\centering
        \begin{algorithm}[H]\captionsetup{labelfont={sc,bf}}
	\small
        \captionof{algorithm}{Wasserstein Auto-Encoder with GAN-based penalty (WAE-GAN).}
        \label{alg:WAE-GAN}
        \begin{spacing}{1.2}
        \renewcommand{\algorithmicensure}{\textbf{Output:}}
          \begin{algorithmic}
          \REQUIRE{Regularization coefficient $\lambda>0$.}
          \STATE{
          Initialize the parameters of the encoder $Q_\phi$, \\decoder $G_\theta$, and latent discriminator $D_\gamma$.
          }
          \WHILE{$(\phi, \theta)$ not converged}
          \STATE{
          Sample $\{x_1,\dots,x_n\}$ from the training set          
          
          Sample $\{z_1,\dots,z_n\}$ from the prior $P_Z$
          
          Sample $\tilde{z}_i$ from $Q_\phi(Z|x_i)$ for $i=1,\dots,n$
          
          Update $D_\gamma$ by ascending:
          \[\frac{\lambda}{n} \sum_{i=1}^n \log D_\gamma(z_i) + \log \bigl(1-D_\gamma(\tilde{z}_i)\bigr)\]

\vspace{.05cm}
          Update $Q_\phi$ and $G_\theta$ by descending:
          \[\frac{1}{n}\sum_{i=1}^n c\bigl(x_i, G_\theta(\tilde{z}_i)\bigr) - \lambda \cdot \log D_\gamma(\tilde{z}_i)\]
          }
          \ENDWHILE
          \end{algorithmic}
        \end{spacing}
        \end{algorithm}
\end{minipage}
\;
\begin{minipage}{.45\textwidth}
	\centering
        \begin{algorithm}[H]\captionsetup{labelfont={sc,bf}}
	\small
        \captionof{algorithm}{Wasserstein Auto-Encoder with MMD-based penalty (WAE-MMD).}
        \label{alg:WAE-MMD}
        \begin{spacing}{1.32}
        \renewcommand{\algorithmicensure}{\textbf{Output:}}
          \begin{algorithmic}
          \REQUIRE{Regularization coefficient $\lambda>0$, characteristic positive-definite kernel $k$.}
          \STATE{
          Initialize the parameters of the encoder $Q_\phi$, \\decoder $G_\theta$, and latent discriminator $D_\gamma$.
          }
          \WHILE{$(\phi, \theta)$ not converged}
          \STATE{
          Sample $\{x_1,\dots,x_n\}$ from the training set          
          
          Sample $\{z_1,\dots,z_n\}$ from the prior $P_Z$
          
          Sample $\tilde{z}_i$ from $Q_\phi(Z|x_i)$ for $i=1,\dots,n$

          Update $Q_\phi$ and $G_\theta$ by descending:
          \begin{align*}
          &\frac{1}{n}\sum_{i=1}^n c\bigl(x_i, G_\theta(\tilde{z}_i)\bigr)+  \frac{\lambda}{n(n-1)}\sum_{\ell\neq j}  k(z_\ell,z_j)\\ &+ \frac{\lambda}{n(n-1)}\sum_{\ell\neq j}k(\tilde{z}_\ell,\tilde{z}_j) - \frac{2\lambda}{n^2}\sum_{\ell, j}k(z_\ell, \tilde{z}_j)
          \end{align*}
          }
          \ENDWHILE
          \end{algorithmic}
        \end{spacing}
        \end{algorithm}\end{minipage}

We point out once again that the encoders $Q_\phi(Z|x)$ in Algorithms \ref{alg:WAE-GAN} and \ref{alg:WAE-MMD} can be non-random, i.e.\:deterministically mapping input points to the latent codes.
In this case $Q_\phi(Z|x) = \delta_{\mu_\phi(x)}$ for a function $\mu_\phi\colon\X\to\Z$ and in order to \emph{sample} $\tilde{z}_i$ from $Q_\phi(Z|x_i)$ we just need to return $\mu_\phi(x_i)$.

\section{Related work}
\label{sec:relatedwork}
{\bf Literature on auto-encoders} 
Classical unregularized auto-encoders minimize only the reconstruction cost.
This results in different training points being encoded into non-overlapping zones chaotically scattered all across the $\Z$ space with ``holes'' in between where the decoder mapping $P_G(X|Z)$ has never been trained.
Overall, the encoder $Q(Z|X)$ trained in this way does not provide a useful representation and sampling from the latent space $\Z$ becomes hard \cite{BCV13}.

Variational auto-encoders \cite{KW14} minimize a variational bound on the KL-divergence $D_\KL(P_X,P_G)$ which is composed of the reconstruction cost plus the regularizer $\ee{P_X}{D_\KL(Q(Z|X), P_Z)}$. The regularizer  captures how distinct the image by the encoder of {\em each} training example is from the prior $P_Z$, which is not guaranteeing that the overall encoded distribution $\ee{P_X}{Q(Z|X)}$ matches $P_Z$ like WAE does. 
Also, VAEs require non-degenerate (i.e.\:non-deterministic) Gaussian encoders and random decoders for which the term $\log p_G(x|z)$ can be computed and differentiated with respect to the parameters. Later \cite{MNG17} proposed a way to use VAE with non-Gaussian encoders.
WAE minimizes the optimal transport $W_c(P_X,P_G)$ and allows both probabilistic and deterministic encoder-decoder pairs of any kind.

The VAE regularizer can be also equivalently written \cite{HJ16} as a sum of $D_\KL(Q_Z, P_Z)$ and a mutual information $\mathbb{I}_Q(X,Z)$ between the images $X$ and latent codes $Z$ jointly distributed according to $P_X\times Q(Z|X)$.
This observation provides another intuitive way to explain a difference between our algorithm and VAEs: WAEs simply drop the mutual information term $\mathbb{I}_Q(X,Z)$ in the VAE regularizer.

When used with $c(x,y) = \|x - y\|^2_2$ WAE-GAN is equivalent to adversarial auto-encoders (AAE) proposed by \cite{MSJ+16}.
Theory of \cite{BGT+17} (and in particular Theorem \ref{thm:main}) thus suggests that AAEs minimize the 2-Wasserstein distance between $P_X$ and $P_G$.
This provides the first theoretical justification for AAEs known to the authors.
WAE generalizes AAE in two ways: first, it can use any cost function~$c$ in the input space~$\X$; second, it can use any discrepancy measure $\mathcal{D}_Z$ in the latent space $\Z$ (for instance MMD), not necessarily the adversarial one of WAE-GAN.

Finally, \cite{ZSE17} independently proposed a regularized auto-encoder objective similar to \cite{BGT+17} and our \eqref{eq:wae-obj} based on very different motivations and arguments.
Following VAEs their objective (called InfoVAE) defines the reconstruction cost in the image space \emph{implicitly} through the negative log likelihood term $-\log p_G(x|z)$, which should be properly normalized for all $z\in\Z$.
In theory VAE and InfoVAE can both induce arbitrary cost functions, however in practice this may require an estimation of the normalizing constant (partition function) which can\footnote{Two popular choices are Gaussian and Bernoulli decoders $P_G(X|Z)$ leading to pixel-wise squared and cross-entropy losses respectively. In both cases the normalizing constants can be computed in closed form and don't depend on $Z$.} be different for different values of $z$.
WAEs specify the cost $c(x,y)$ \emph{explicitly} and don't constrain it in any way.

\medskip
{\bf Literature on OT}
\cite{GCPB16} address computing the OT cost in large scale using SGD and sampling.
They approach this task either through the dual formulation, or via a regularized version of the primal.
They do not discuss any implications for generative modeling.
Our approach is based on the primal form of OT, we arrive at regularizers which are very different, and our main focus is on generative modeling.

The WGAN \cite{AB17} minimizes the 1-Wasserstein distance $W_1(P_X,P_G)$ for generative modeling.
The authors approach this task from the dual form.
Their algorithm comes without an encoder and can not be readily applied to any other cost $W_c$, because the neat form of the Kantorovich-Rubinstein duality \eqref{eq:KRD} holds only for $W_1$.
WAE approaches the same problem from the primal form, can be applied for any cost function $c$, and comes naturally with an encoder.

In order to compute the values \eqref{eq:ot} or \eqref{eq:KRD} of OT we need to handle non-trivial constraints, either on the coupling distribution $\Gamma$ or on the function $f$ being considered.
Various approaches have been proposed in the literature to circumvent this difficulty.
For $W_1$ \cite{AB17} tried to implement the constraint in the dual formulation \eqref{eq:KRD} by clipping the weights of the neural network~$f$.
Later \cite{GAA+17} proposed to relax the same constraint by penalizing the objective of \eqref{eq:KRD} with a term $\lambda\cdot \E\left(\left\|\nabla f(X)\right\|-1\right)^2$ which should not be greater than 1 if $f\in \mathcal{F}_L$.
In a more general OT setting of $W_c$ \cite{C13} proposed to penalize the objective of \eqref{eq:ot} with the KL-divergence $\lambda\cdot D_\KL(\Gamma, P\otimes Q)$ between the coupling distribution and the product of marginals. 
\cite{GCPB16} showed that this entropic regularization drops the constraints on functions in the dual formulation as opposed to \eqref{eq:KRD}.
Finally, in the context of \emph{unbalanced optimal transport} it has been proposed to relax the constraint in \eqref{eq:ot} by regularizing the objective with $\lambda\cdot \bigl(D_f(\Gamma_X, P) + D_f(\Gamma_Y, Q)\bigr)$ \cite{chizat2015unbalanced, liero2015optimal}, where $\Gamma_X$ and $\Gamma_Y$ are marginals of $\Gamma$.
In this paper we propose to relax OT in a way similar to the unbalanced optimal transport, i.e. by adding additional divergences to the objective.
However, we show that in the particular context of generative modeling, only one extra divergence is necessary.

\begin{figure}[h]
    \centering
    \begin{minipage}{.04\textwidth}
        \begin{turn}{-90}\hspace{-.3cm}Test interpolations  \hspace{1cm}Test reconstructions  \hspace{1.6cm}Random samples\end{turn}       
    \end{minipage}
    \begin{minipage}{0.3\textwidth}
        \centering
        VAE
        
        \vspace{.1cm}
        \includegraphics[width=\linewidth]{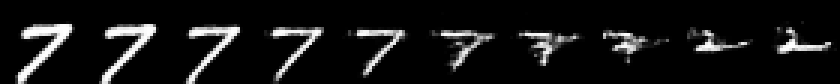}
        \includegraphics[width=\linewidth]{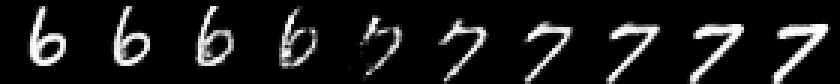}
        \includegraphics[width=\linewidth]{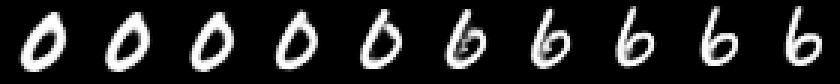}
        \includegraphics[width=\linewidth]{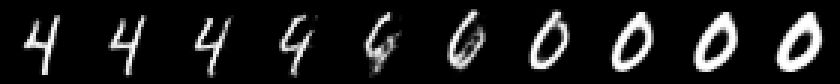}
        \includegraphics[width=\linewidth]{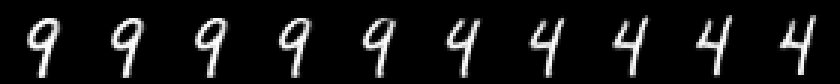}
        \includegraphics[width=\linewidth]{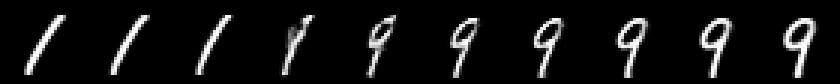}
        \includegraphics[width=\linewidth]{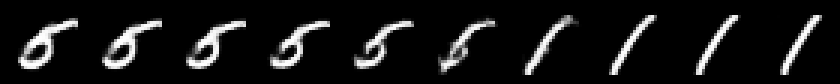}
        
        \vspace{.1cm}
        \includegraphics[width=\linewidth]{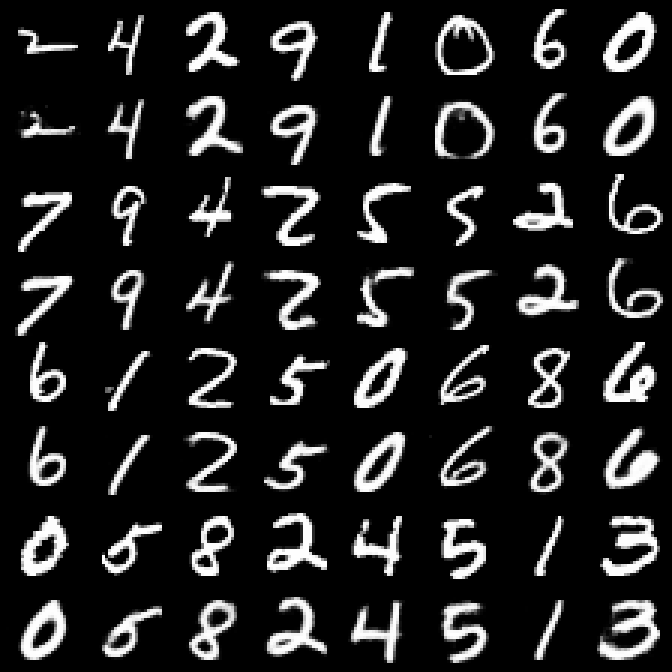}
        
        \vspace{.1cm}
        \includegraphics[width=\linewidth]{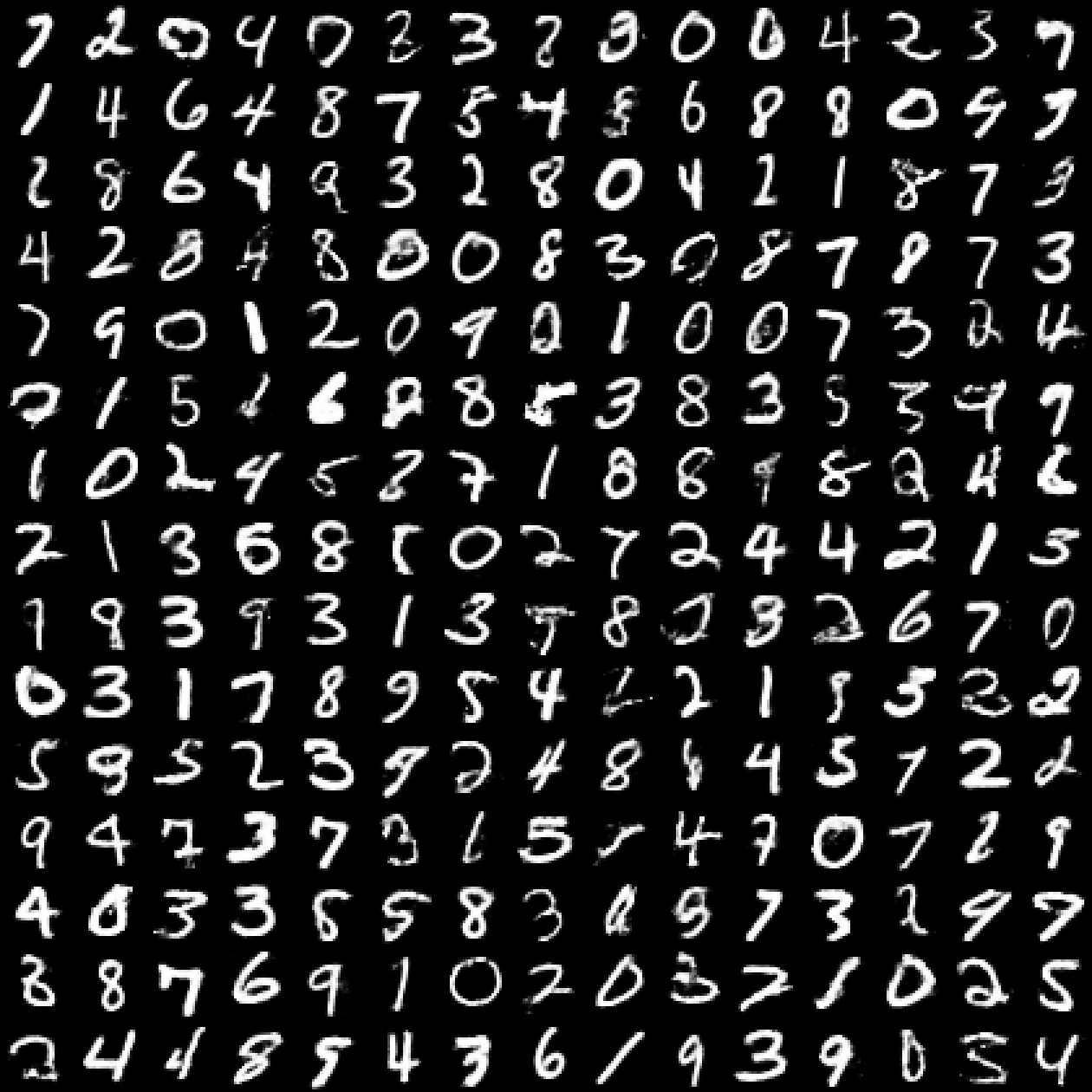}
    \end{minipage}
    \hspace{.1cm}
    \begin{minipage}{.3\textwidth}
        \centering
        WAE-MMD
        
        \vspace{.1cm}
        \includegraphics[width=\linewidth]{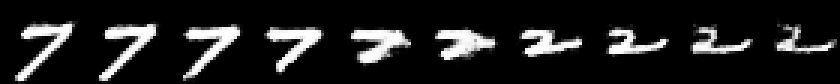}
        \includegraphics[width=\linewidth]{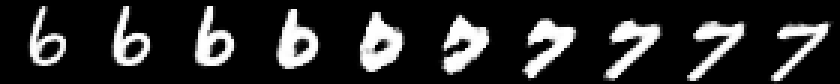}
        \includegraphics[width=\linewidth]{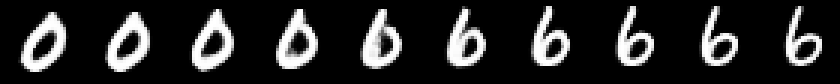}
        \includegraphics[width=\linewidth]{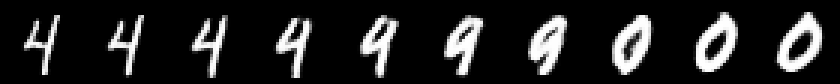}
        \includegraphics[width=\linewidth]{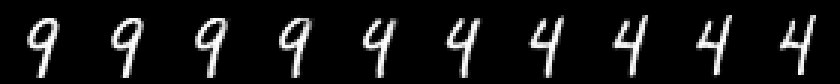}
        \includegraphics[width=\linewidth]{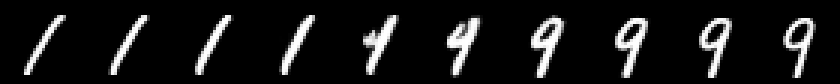}
        \includegraphics[width=\linewidth]{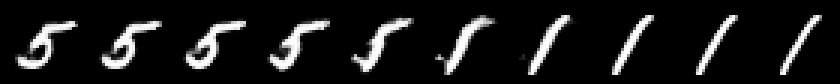}
        
        \vspace{.1cm}
        \includegraphics[width=\linewidth]{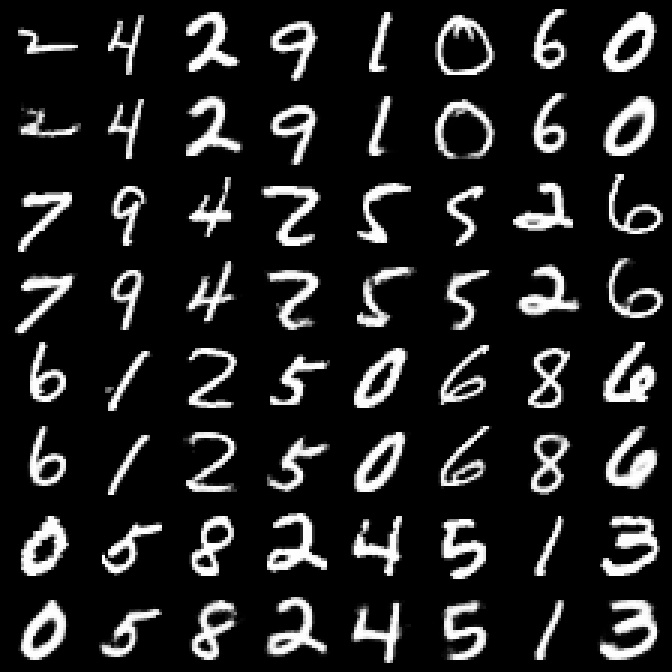}
        
        \vspace{.1cm}
        \includegraphics[width=\linewidth]{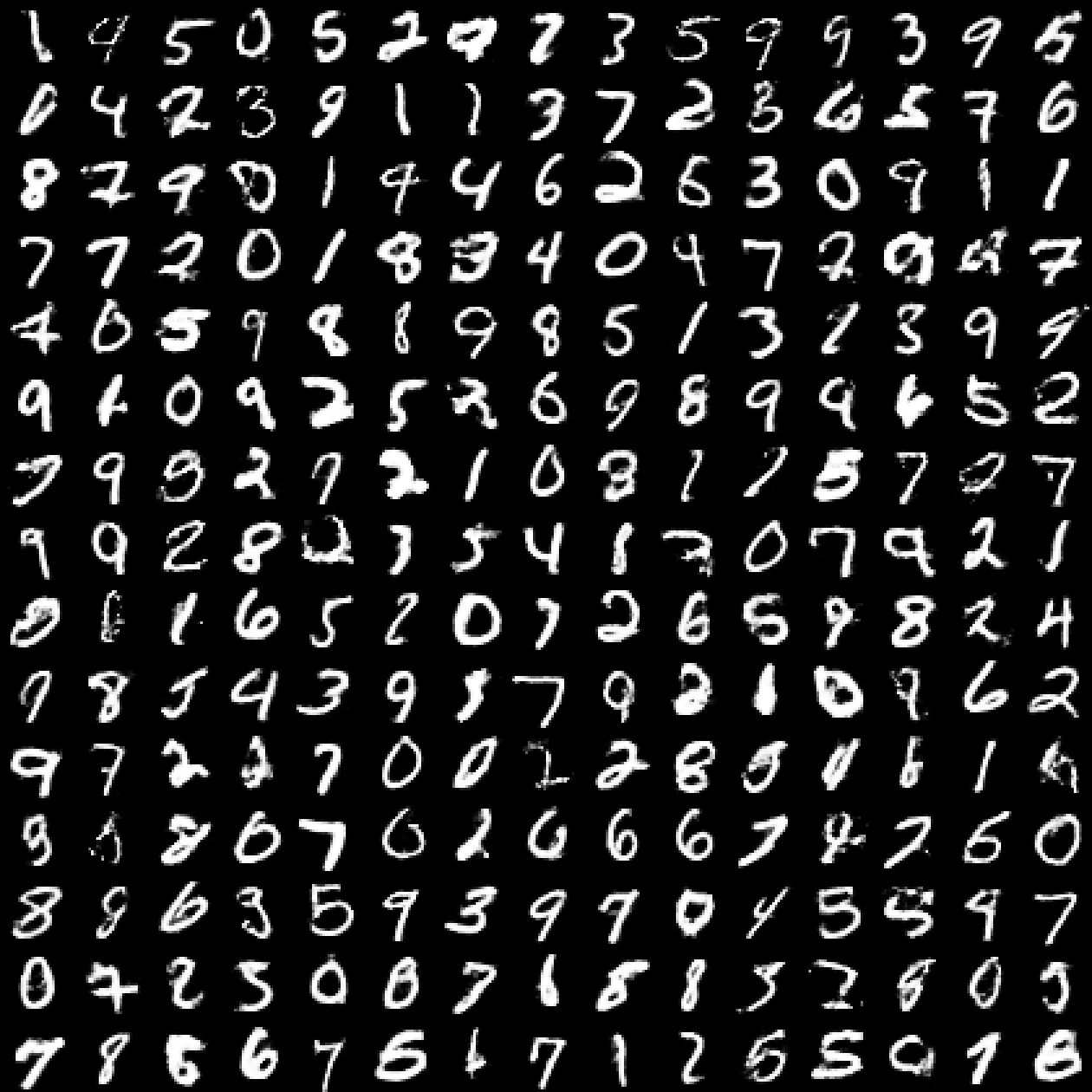}        
    \end{minipage}
    \hspace{.1cm}
    \begin{minipage}{0.3\textwidth}
        \centering
        WAE-GAN
        
        \vspace{.1cm}
        \includegraphics[width=\linewidth]{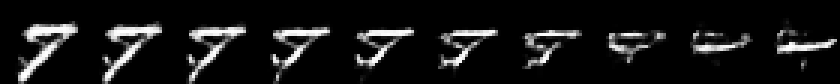}
        \includegraphics[width=\linewidth]{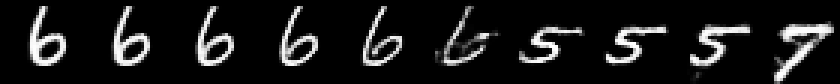}
        \includegraphics[width=\linewidth]{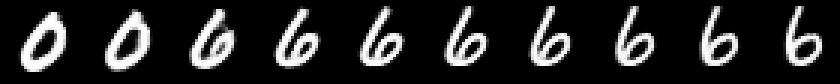}
        \includegraphics[width=\linewidth]{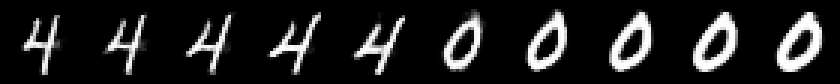}
        \includegraphics[width=\linewidth]{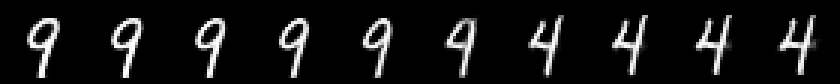}
        \includegraphics[width=\linewidth]{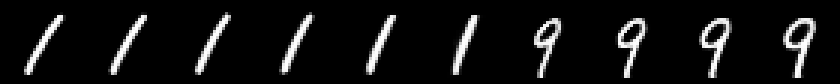}
        \includegraphics[width=\linewidth]{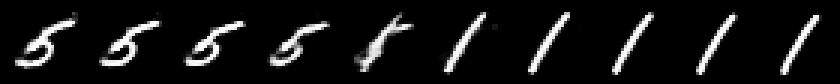}
        
        \vspace{.1cm}
        \includegraphics[width=\linewidth]{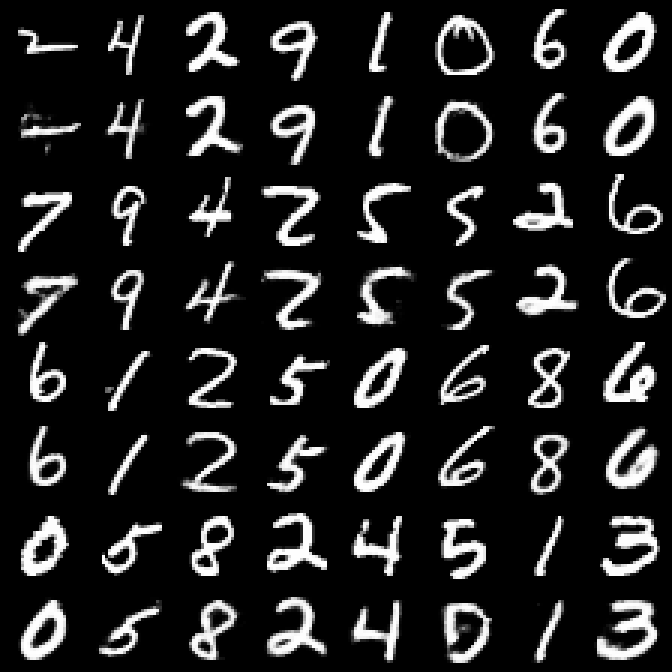}
        
        \vspace{.1cm}
        \includegraphics[width=\linewidth]{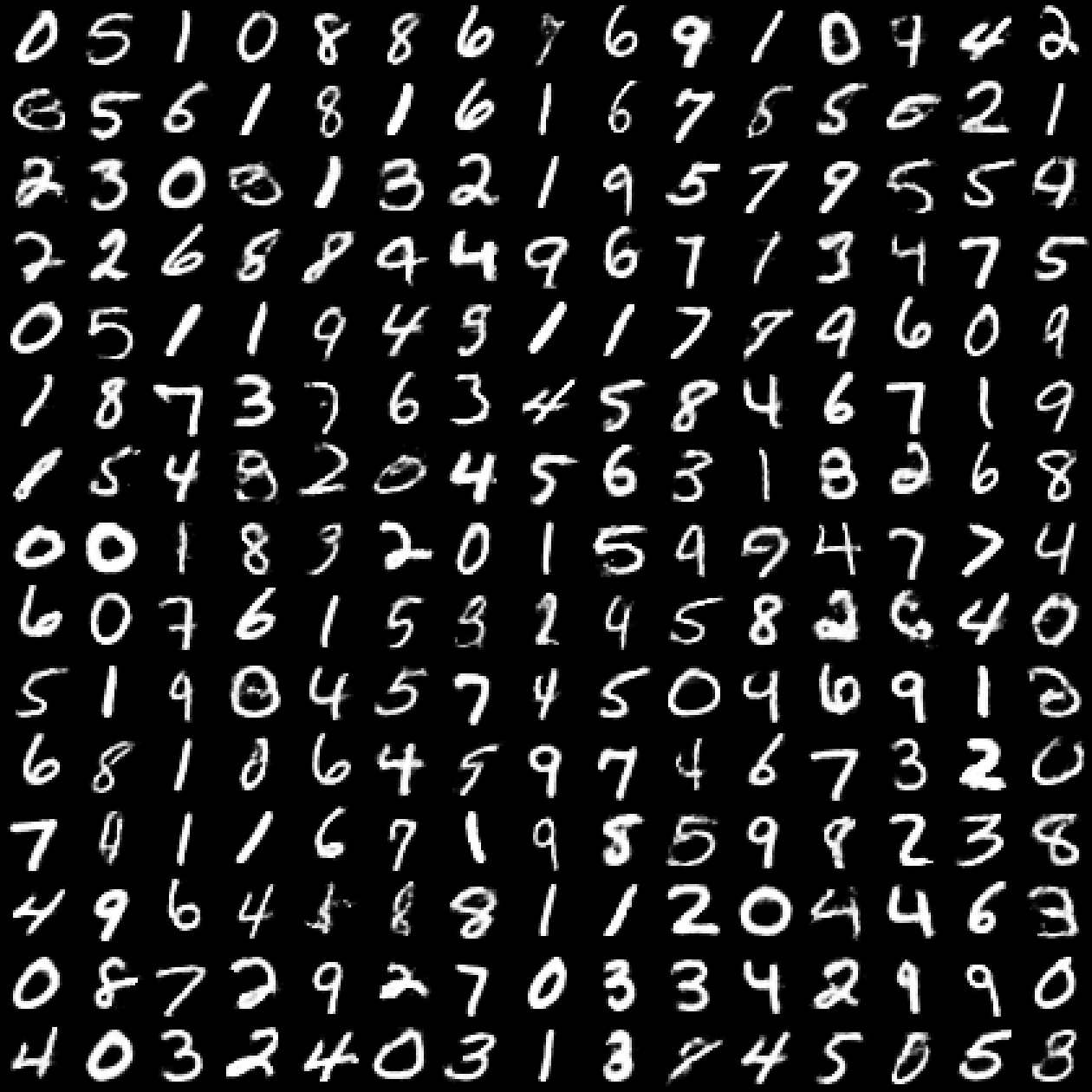}
    \end{minipage}
    \caption{VAE (left column), WAE-MMD (middle column), and WAE-GAN (right column) trained on MNIST dataset. In ``test reconstructions'' odd rows correspond to the real test points.}
    \label{fig:mnist}
\end{figure}

{\bf Literature on GANs}
Many of the GAN variations (including $f$-GAN and WGAN) come without an encoder. Often it may be desirable to reconstruct the latent codes and use the learned manifold, in which cases these models are not applicable.

There have been many other approaches trying to blend the adversarial training of GANs with auto-encoder architectures \cite{EBGAN17, ALI17, AGE17, BEGAN17}.
The approach proposed by \cite{AGE17} is perhaps the most relevant to our work.
The authors use the discrepancy between $Q_Z$ and the distribution $\E_{Z'\sim P_Z}[Q\bigl(Z|G(Z')\bigr)]$ of auto-encoded noise vectors as the objective for the max-min game between the encoder and decoder respectively.
While the authors showed that the saddle points correspond to $P_X = P_G$, they admit that encoders and decoders trained in this way have no incentive to be reciprocal. As a workaround they propose to include an additional reconstruction term to the objective.
WAE does not necessarily lead to a min-max game, uses a different penalty, and has a clear theoretical foundation.

Several works used reproducing kernels in context of GANs. 
\cite{LSZ15,DRG15} use MMD with a fixed kernel $k$ to match $P_X$ and $P_G$ directly in the input space $\X$.
These methods have been criticised to require larger mini-batches during training: estimating $\mathrm{MMD}_k(P_X, P_G)$ requires number of samples roughly proportional to the dimensionality of the input space $\X$ \cite{RRS15} which is typically larger than $10^3$.
\cite{MMDGAN17} take a similar approach but further train $k$ adversarially so as to arrive at a meaningful loss function.
WAE-MMD uses MMD to match $Q_Z$ to the prior $P_Z$ in the latent space $\Z$. 
Typically $\Z$ has no more than $100$ dimensions and $P_Z$ is Gaussian, which allows us to use regular mini-batch sizes to accurately estimate MMD.

\begin{figure}[h]
    \centering
    \begin{minipage}{.04\textwidth}
        \begin{turn}{-90}\hspace{-.3cm}Test interpolations  \hspace{1cm}Test reconstructions  \hspace{1.6cm}Random samples\end{turn}       
    \end{minipage}
    \begin{minipage}{0.3\textwidth}
        \centering
        VAE
        
%
%
        \vspace{.1cm}
        \includegraphics[width=\linewidth]{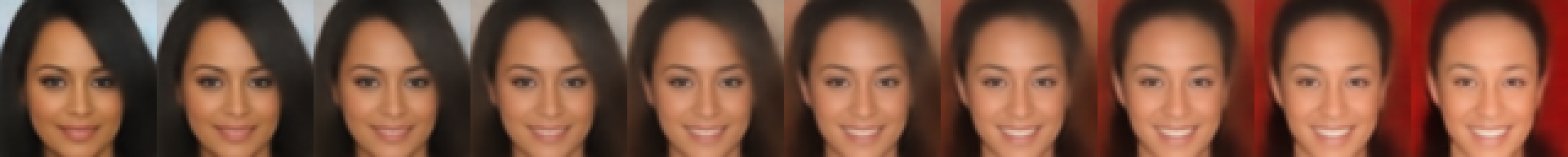}
        \includegraphics[width=\linewidth]{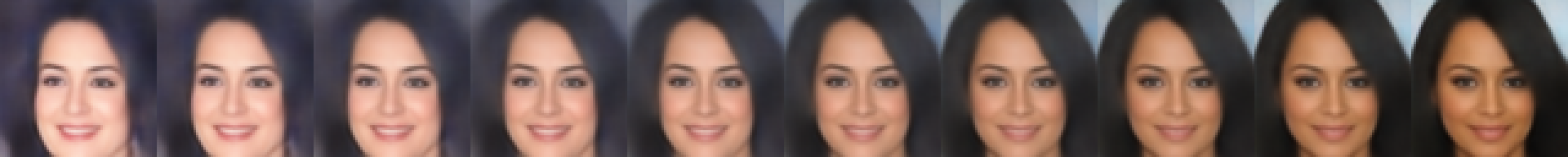}
        \includegraphics[width=\linewidth]{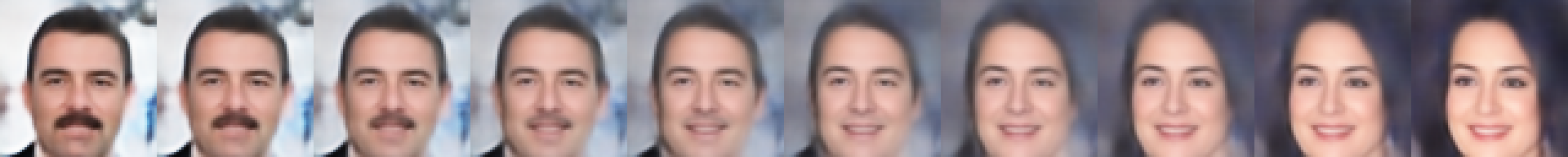}
        \includegraphics[width=\linewidth]{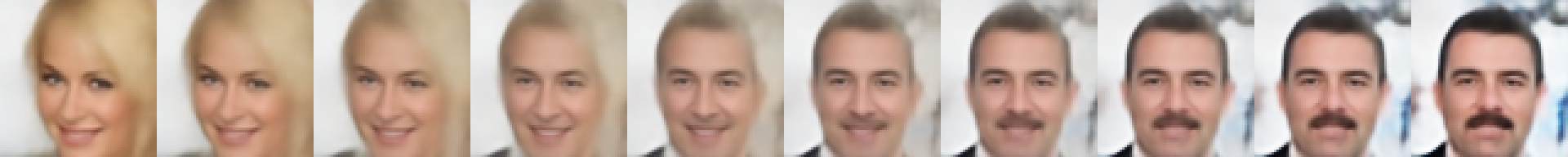}
        \includegraphics[width=\linewidth]{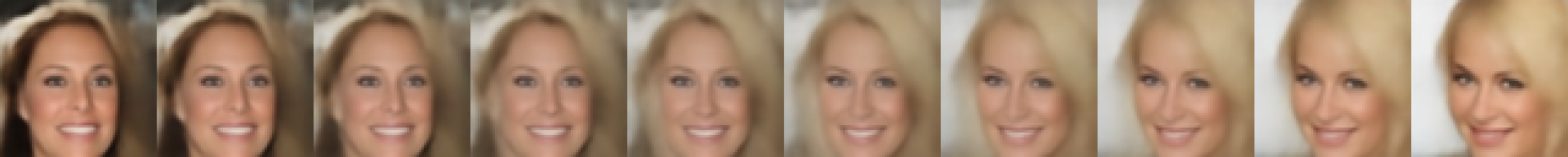}
        \includegraphics[width=\linewidth]{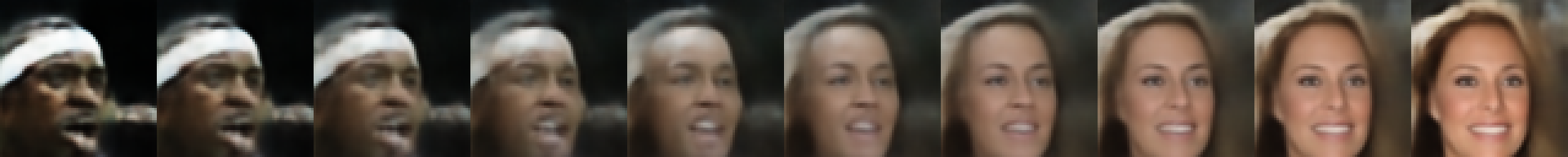}
        \includegraphics[width=\linewidth]{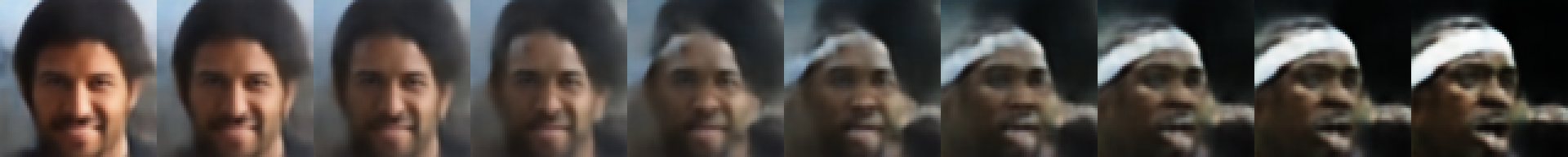}
        
        \vspace{.1cm}
        \includegraphics[width=\linewidth]{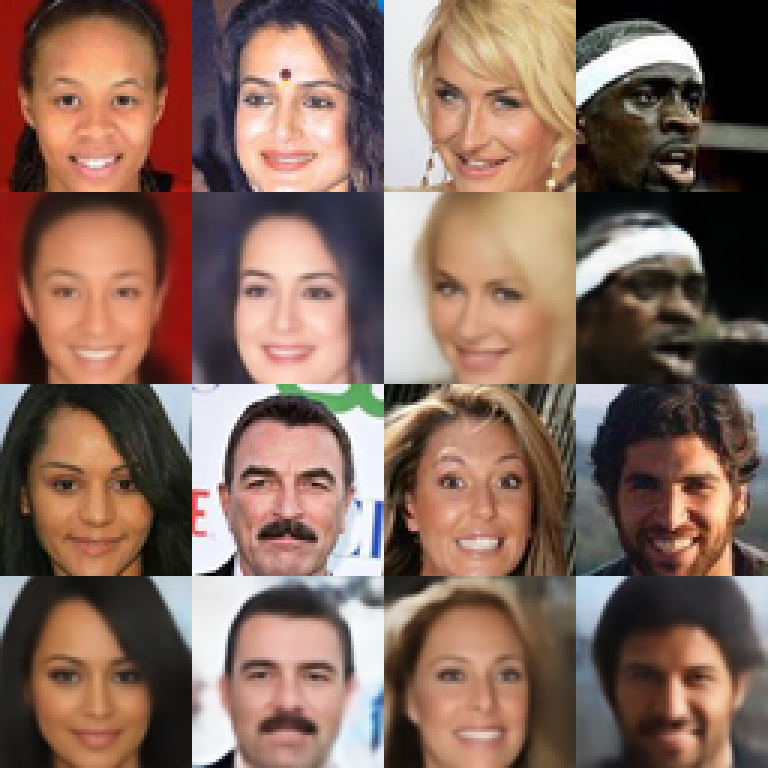}
        
        \vspace{.1cm}
        \includegraphics[width=\linewidth]{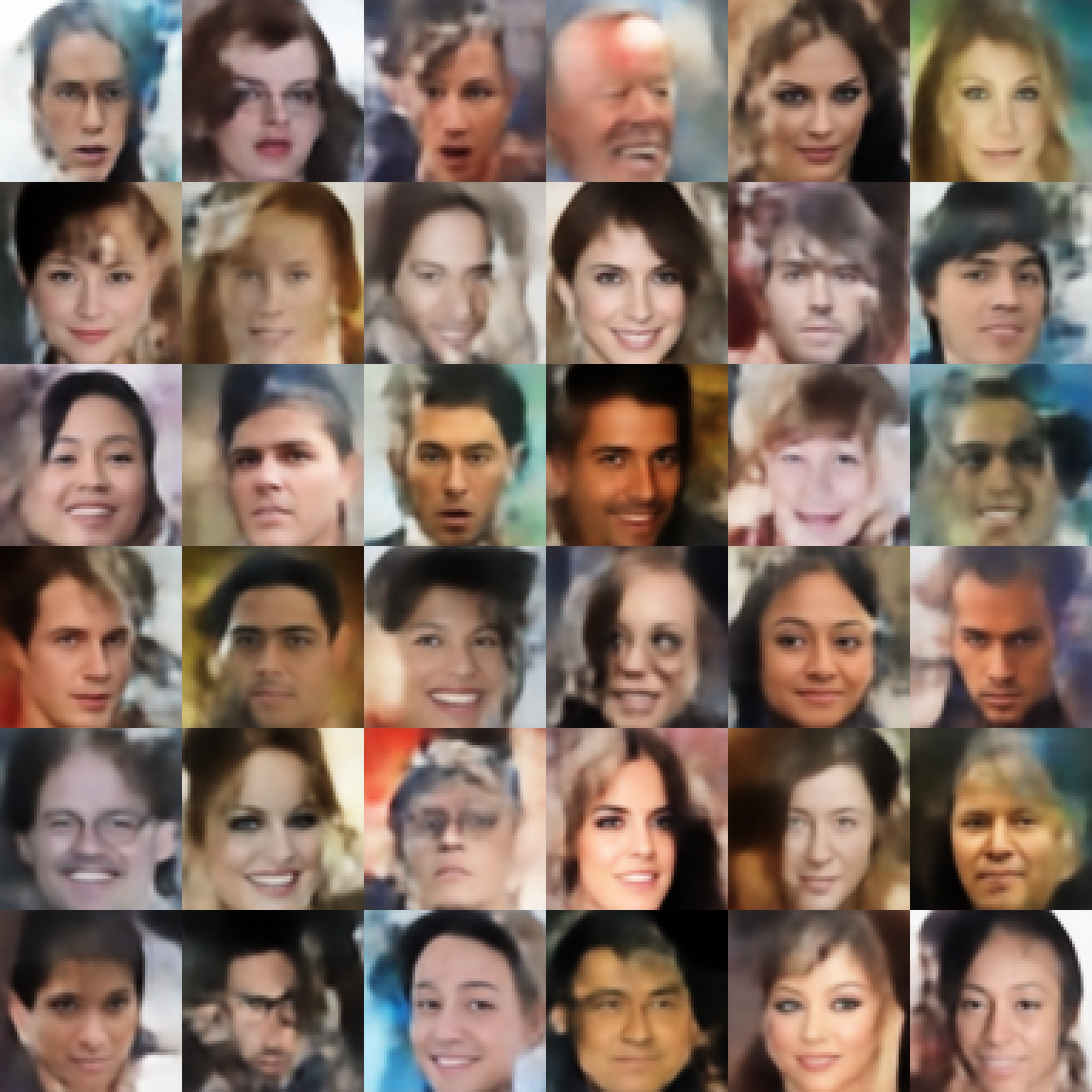}
    \end{minipage}
    \hspace{.1cm}
    \begin{minipage}{.3\textwidth}
        \centering
        WAE-MMD
        
%
%

	\vspace{.1cm}
        \includegraphics[width=\linewidth]{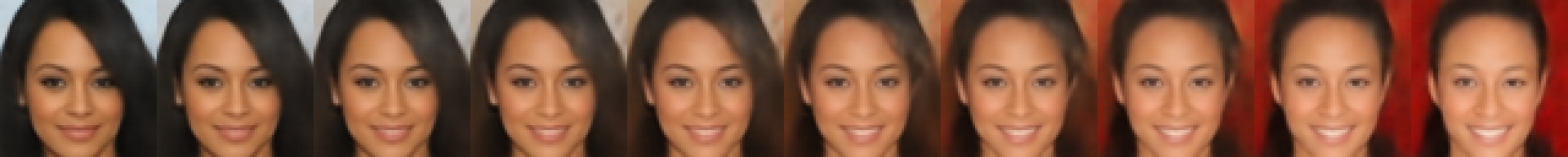}
        \includegraphics[width=\linewidth]{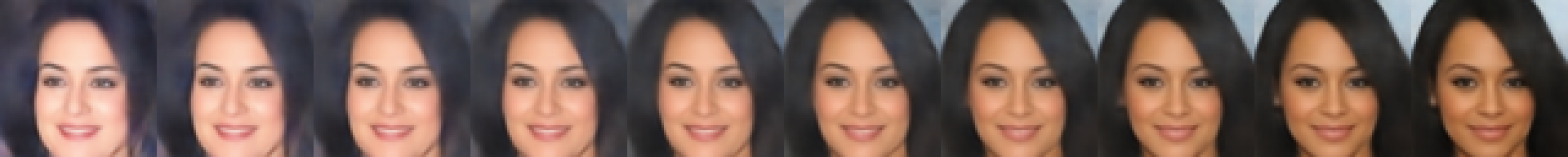}
        \includegraphics[width=\linewidth]{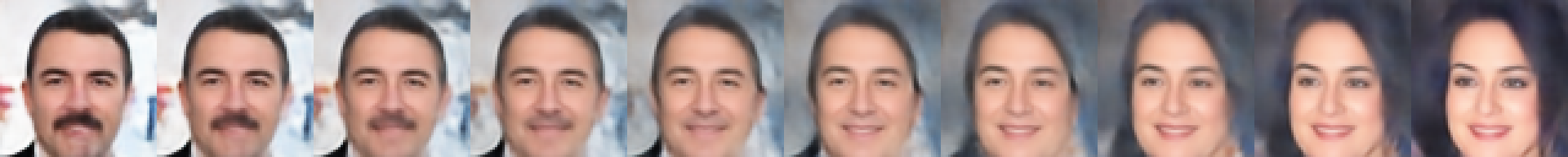}
        \includegraphics[width=\linewidth]{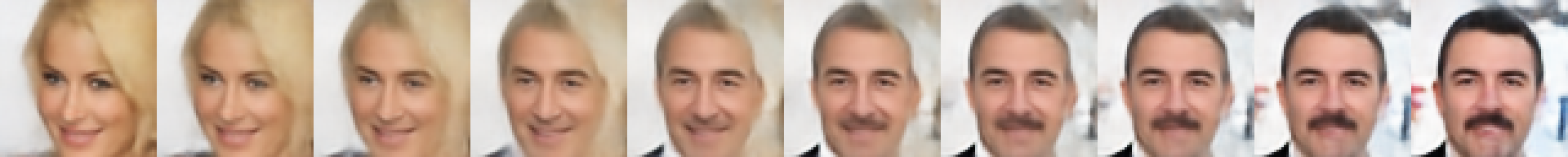}
        \includegraphics[width=\linewidth]{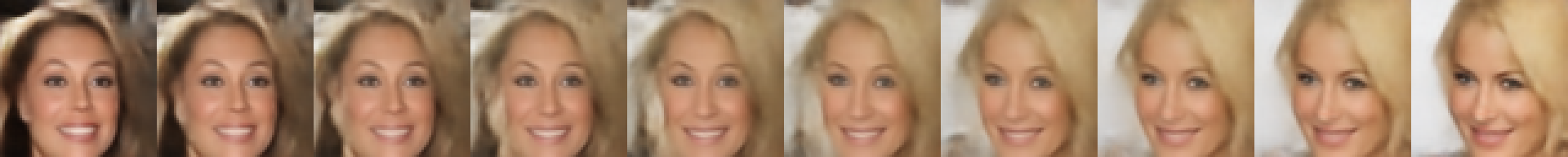}
        \includegraphics[width=\linewidth]{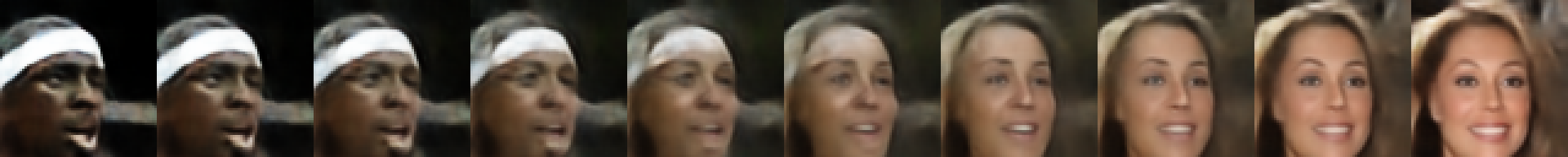}
        \includegraphics[width=\linewidth]{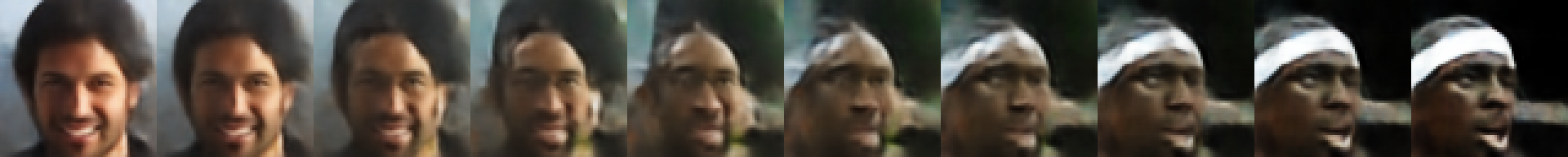}
        
        \vspace{.1cm}
        \includegraphics[width=\linewidth]{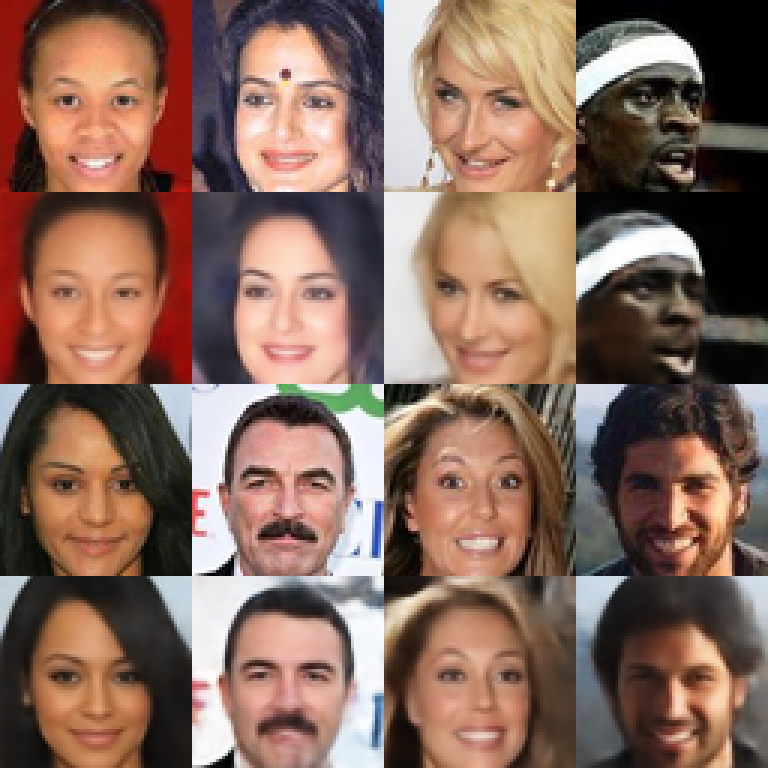}
        
        \vspace{.1cm}
        \includegraphics[width=\linewidth]{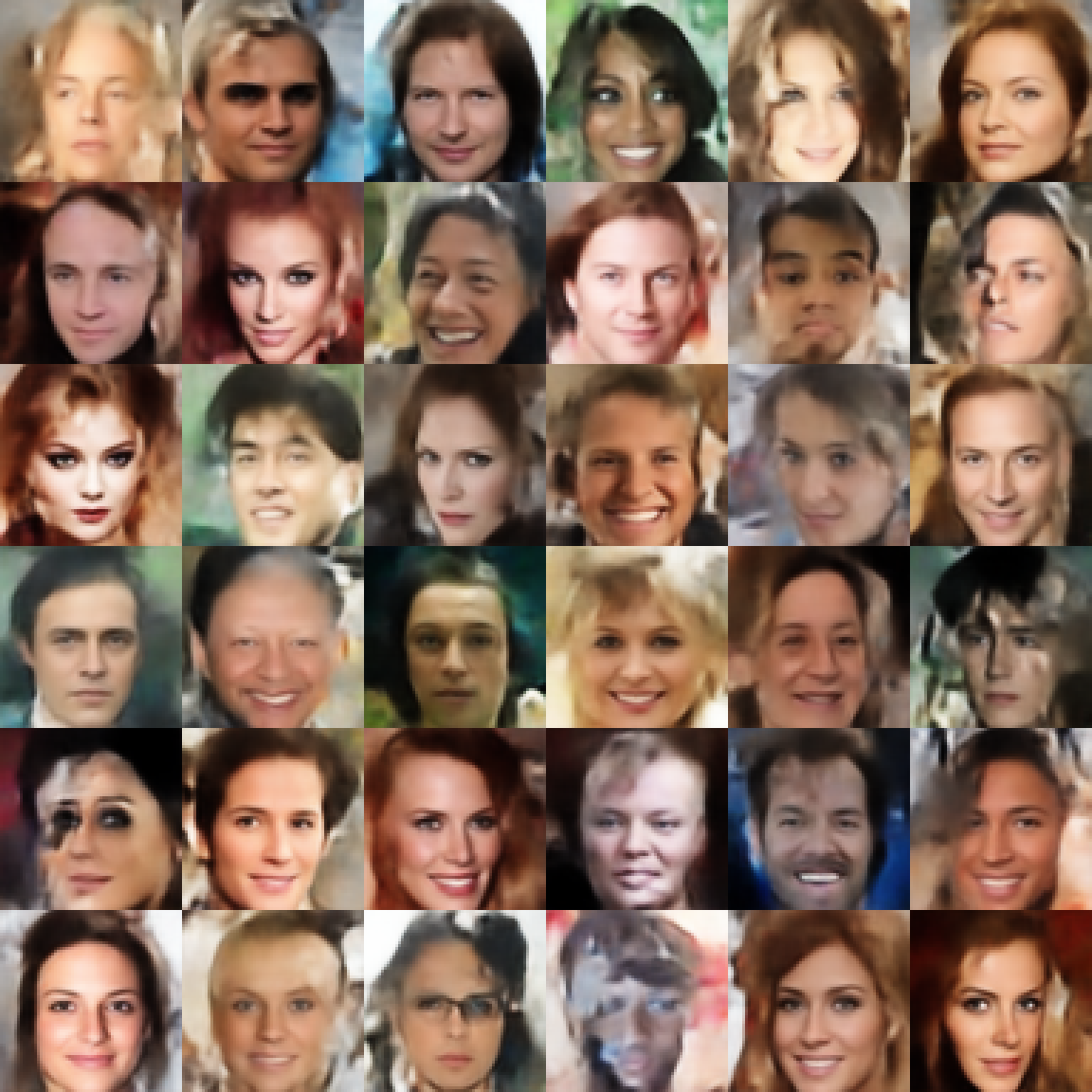}
       
    \end{minipage}
    \hspace{.1cm}
    \begin{minipage}{0.3\textwidth}
        \centering
        WAE-GAN
        
%
%
\vspace{.1cm}
        \includegraphics[width=\linewidth]{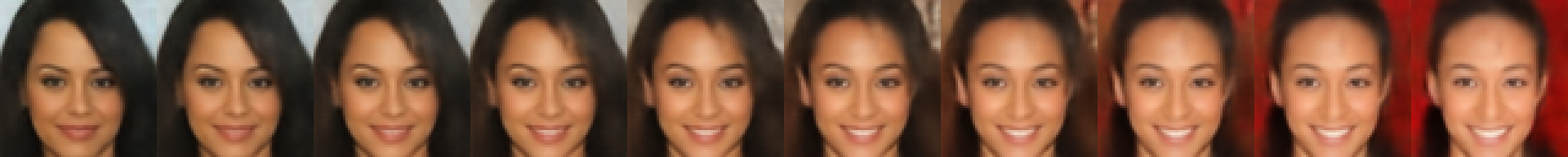}
        \includegraphics[width=\linewidth]{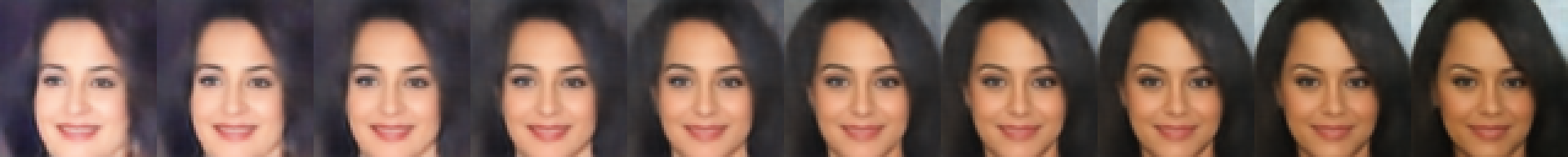}
        \includegraphics[width=\linewidth]{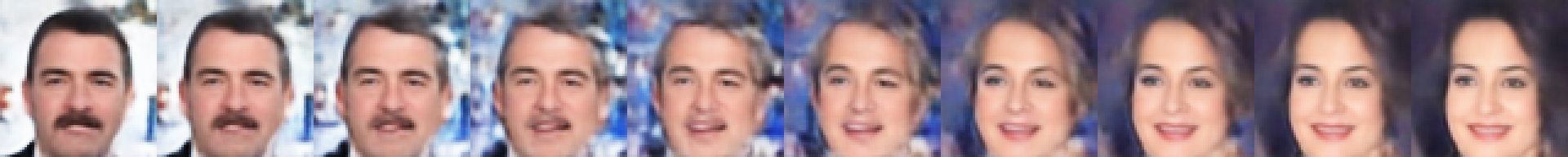}
        \includegraphics[width=\linewidth]{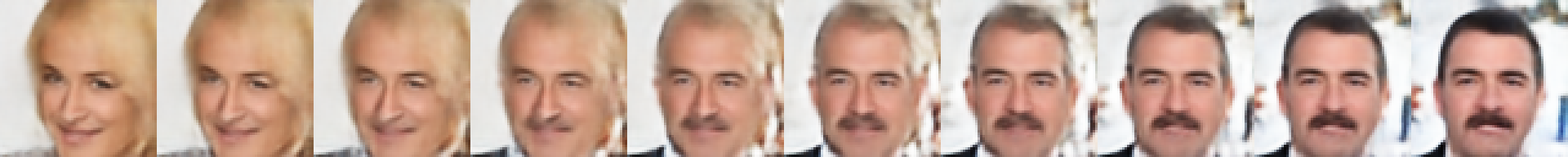}
        \includegraphics[width=\linewidth]{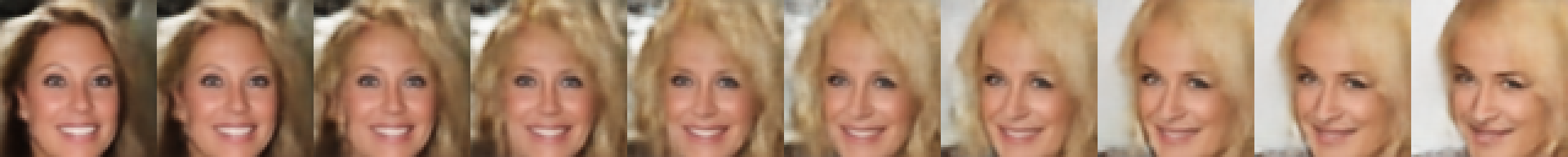}
        \includegraphics[width=\linewidth]{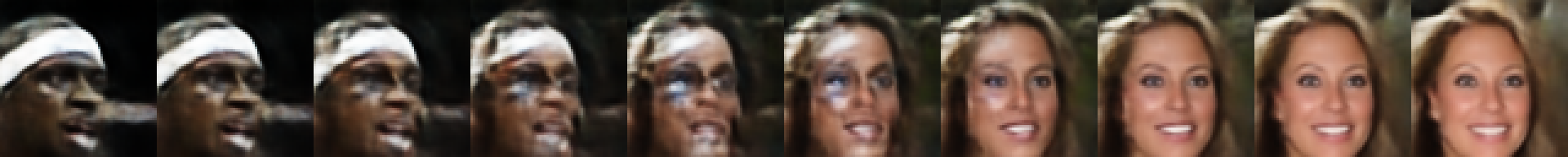}
        \includegraphics[width=\linewidth]{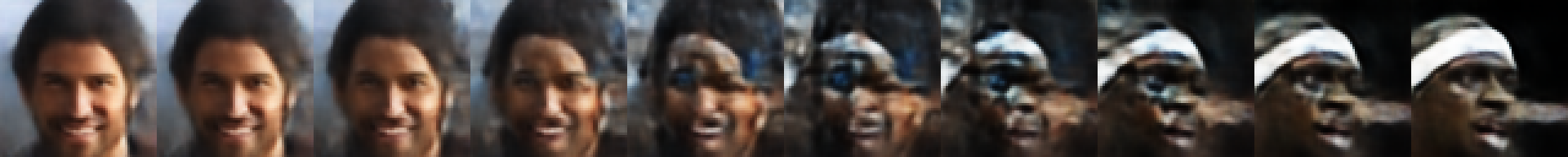}
        
        \vspace{.1cm}
        \includegraphics[width=\linewidth]{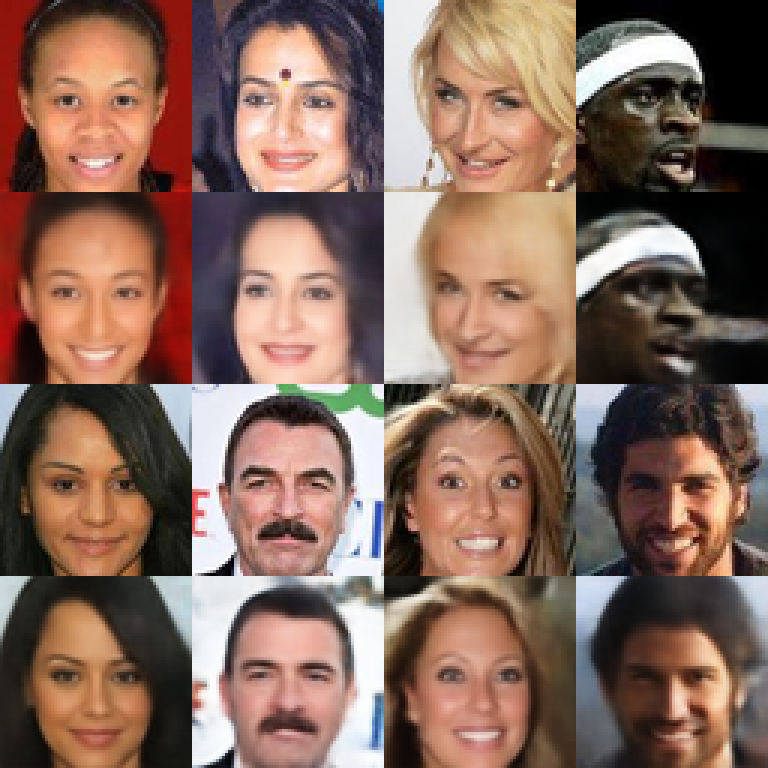}
        
        \vspace{.1cm}
        \includegraphics[width=\linewidth]{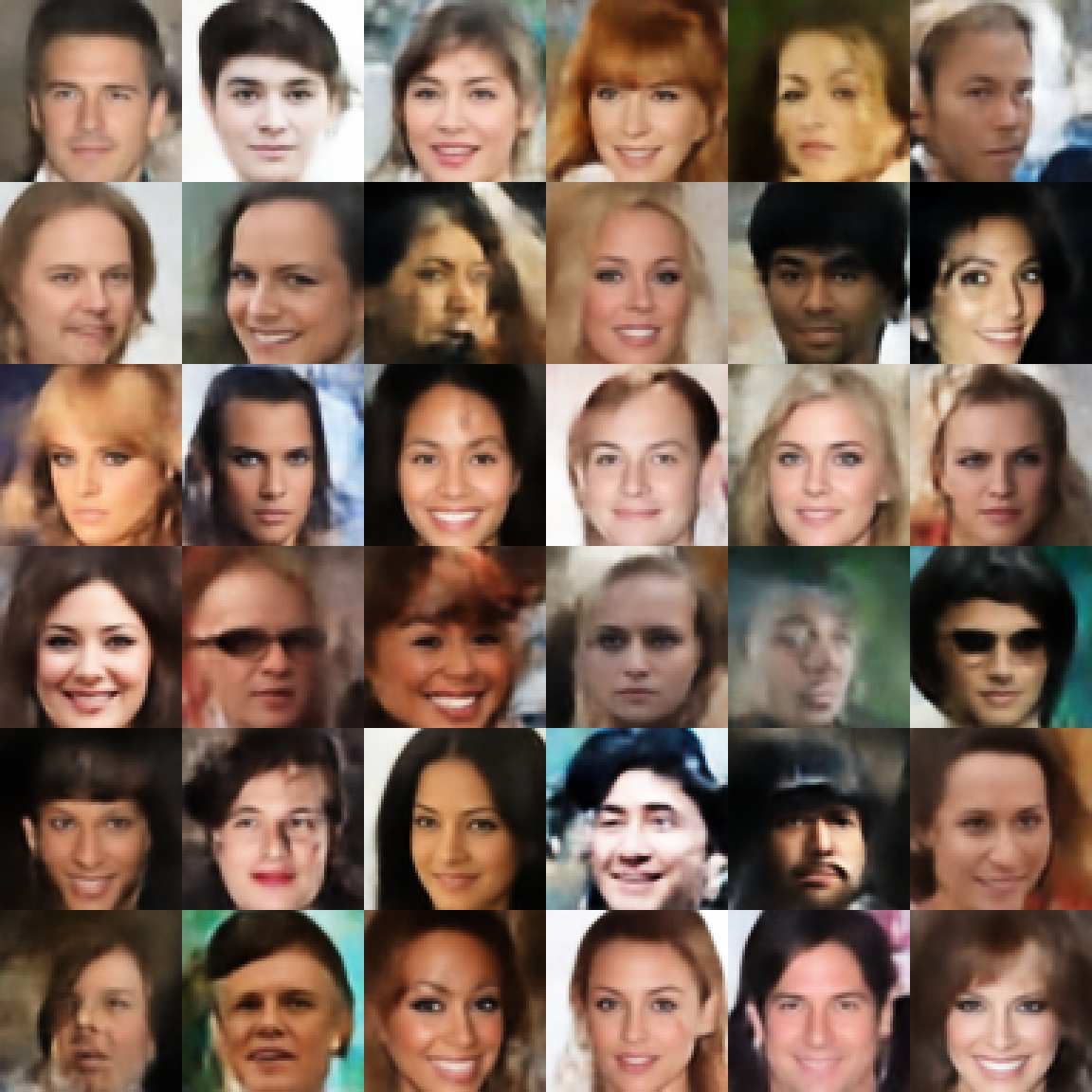}

    \end{minipage}
    \caption{VAE (left column), WAE-MMD (middle column), and WAE-GAN (right column) trained on CelebA dataset. In ``test reconstructions'' odd rows correspond to the real test points.}
    \label{fig:celeba}
\end{figure}

\section{Experiments}
\label{sec:experiments}
In this section we empirically evaluate\footnote{
The code is available at \url{github.com/tolstikhin/wae}.
} the proposed WAE model.
We would like to test if WAE can simultaneously achieve 
(i)~accurate reconstructions of data points,
(ii)~reasonable geometry of the latent manifold, and
(iii)~random samples of good (visual) quality.
Importantly, the model should generalize well: requirements (i) and (ii) should be met on both training and test data.
We trained WAE-GAN and WAE-MMD (Algorithms \ref{alg:WAE-GAN} and \ref{alg:WAE-MMD}) on two real-world datasets: 
MNIST \cite{mnist} consisting of 70k images
and CelebA \cite{celeba2015} containing roughly 203k images.

\medskip
{\bf Experimental setup}
In all reported experiments we used Euclidian latent spaces $\Z=\R^{d_z}$ for various ${d_z}$ depending on the complexity of the dataset,
isotropic Gaussian prior distributions $P_Z(Z) = \mathcal{N}(Z; \boldsymbol{0}, \sigma^2_z \cdot \boldsymbol{I}_d)$ over $\Z$,
and a squared cost function $c(x, y) = \|x - y\|_2^2$ for data points $x,y\in\X=\R^{d_x}$.
We used \emph{deterministic} encoder-decoder pairs,
Adam \cite{KL14} with $\beta_1 = 0.5, \beta_2 = 0.999$, and convolutional deep neural network architectures for encoder mapping $\mu_\phi\colon \X\to\Z$ and decoder mapping $G_\theta\colon \Z\to\X$ similar to the DCGAN ones reported by \cite{RMC16} with batch normalization \cite{IS15}.
We tried various values of $\lambda$ and noticed that $\lambda=10$ seems to work good across all datasets we considered.

Since we are using deterministic encoders, choosing $d_z$ larger than intrinsic dimensionality of the dataset would force the encoded distribution $Q_Z$ to live on a manifold in $\Z$. 
This would make matching $Q_Z$ to $P_Z$ impossible if $P_Z$ is Gaussian and may lead to numerical instabilities.
We use $d_z = 8$ for MNIST and $d_z = 64$ for CelebA which seems to work reasonably well.


We also report results of VAEs. 
VAEs used the same latent spaces as discussed above and standard Gaussian priors $P_Z = \mathcal{N}(\boldsymbol{0}, \boldsymbol{I}_d)$. We used Gaussian encoders $Q(Z|X) = \mathcal{N}\bigl(Z; \mu_\phi(X), \Sigma(X)\bigr)$ with mean $\mu_\phi$ and diagonal covariance~$\Sigma$. 
For MNIST we used Bernoulli decoders parametrized by $G_\theta$ and for CelebA the Gaussian decoders $P_G(X|Z) = \mathcal{N}\bigl(X; G_\theta(Z), \sigma^2_G\cdot \boldsymbol{I}_d\bigr)$ with mean $G_\theta$.
Functions $\mu_\phi$, $\Sigma$, and $G_\theta$ were parametrized by deep nets of the same architectures as in WAE.

{\bf WAE-GAN and WAE-MMD specifics}
In WAE-GAN we used discriminator $D$ composed of several fully connected layers with ReLu.
We tried WAE-MMD with the RBF kernel but observed that it fails to penalize the outliers of $Q_Z$ because of the quick tail decay. 
If the codes $\tilde{z}=\mu_\phi(x)$ for some of the training points $x\in\X$ end up far away from the support of $P_Z$ (which may happen in the early stages of training) the corresponding terms in the U-statistic $k(z,\tilde{z}) = e^{-\|\tilde{z} - z\|_2^2/\sigma_k^2}$ will quickly approach zero and provide no gradient for those outliers.
This could be avoided by choosing the kernel bandwidth $\sigma^2_k$ in a data-dependent manner, however in this case per-minibatch U-statistic would not provide an unbiased estimate for the gradient.
Instead, we used the \emph{inverse multiquadratics} kernel $k(x, y) = C / (C + \|x - y\|_2^2)$ which is also characteristic and has much heavier tails.
In all experiments we used $C =  2 d_z \sigma^2_z$, which is the expected squared distance between two multivariate Gaussian vectors drawn from $P_Z$.
This significantly improved the performance compared to the RBF kernel (even the one with $\sigma^2_k = 2 d_z \sigma^2_z$).
Trained models are presented in Figures \ref{fig:mnist} and \ref{fig:celeba}. Further details are presented in Supplementary \ref{appendix:exps}.

\medskip
{\bf Random samples} are generated by sampling $P_Z$ and decoding the resulting noise vectors $z$ into $G_\theta(z)$.
As expected, in our experiments we observed that for both WAE-GAN and WAE-MMD the quality of samples strongly depends on how accurately $Q_Z$ matches $P_Z$. 
To see this, notice that during training the decoder function $G_\theta$ is presented only with encoded versions $\mu_\phi(X)$ of the data points $X\sim P_X$.
Indeed, the decoder is trained on samples from $Q_Z$ and thus there is no reason to expect good results when feeding it with samples from $P_Z$.
In our experiments we noticed that even slight differences between $Q_Z$ and $P_Z$ may affect the quality of samples.

\begin{wraptable}[15]{r}{6.5cm}
\centering
\begin{tabular}{lcc}
	\toprule
	{\bf Algorithm} & {\bf FID} & {\bf Sharpness}\\
  \midrule
        VAE & 63 & $3\times10^{-3}$\\
        WAE-MMD & 55 & $6\times10^{-3}$\\
        WAE-GAN & 42 & $6\times10^{-3}$\\
  \midrule
        \mg{bigVAE} & \mg{45} & \mg{---}\\
        \mg{bigWAE-MMD} & \mg{37} & \mg{---}\\
        \mg{bigWAE-GAN} & \mg{35} & \mg{---}\\
        \midrule
        True data & 2 & $2\times10^{-2}$\\
	\bottomrule
\end{tabular}
\caption{FID (smaller is better) and sharpness (larger is better) scores for samples of various models for CelebA. \label{t:fid}}
\end{wraptable}

In some cases WAE-GAN seems to lead to a better matching and generates better samples than WAE-MMD.
However, due to adversarial training WAE-GAN is less stable than WAE-MMD, which has a very stable training much like VAE.

In order to quantitatively assess the quality of the generated images, we use the {\em Fr\'echet Inception Distance} introduced by \cite{heusel2017gans} and report the results on CelebA based on $10^4$ samples. 
We also heuristically evaluate the \emph{sharpness} of generated samples\,\footnote{
Every image is converted to greyscale and convolved with the Laplace filter $\left(\begin{smallmatrix}0 & 1 & 0\\ 1 & -4 & 1\\0 & 1 & 0\end{smallmatrix}\right)$, which acts as an edge detector. 
We compute the variance of the resulting activations and average these values across 1000 images sampled from a given model.
The blurrier the image, the less edges it has, and the more activations will be close to zero, leading to smaller variances.
} using the Laplace filter. 
The~numbers, summarized in Table \ref{t:fid}, show that WAE-MMD has samples of slightly better quality than VAE, while WAE-GAN achieves the best results overall.
The bigVAE, bigWAE-MMD, and bigWAE-GAN also included in the table are the best models based on larger scale study reported in Supplementary \ref{appendix:exps_extended}, which involved a \emph{much wider} hyperparameter sweep, as well as using ResNet50-v2 \cite{He2016} architecture for the encoder and decoder mappings.
 
 \bigskip
{\bf Test reconstructions and interpolations.} 
We take random points $x$ from the held out test set and report their auto-encoded versions $G_\theta(\mu_\phi(x))$. 
Next, pairs $(x, y)$ of different data points are sampled randomly from the held out test set and encoded: $z_x = \mu_\phi(x)$, $z_y = \mu_\phi(y)$. We \emph{linearly} interpolate between $z_x$ and $z_y$ with equally-sized steps in the latent space and show decoded images.

\section{Conclusion}
\label{sec:conclusion}
Using the optimal transport cost, we have derived Wasserstein auto-encoders---a new family of algorithms for building generative models. We discussed their relations to other probabilistic modeling techniques. We conducted experiments using two particular implementations of the proposed method, showing that in comparison to VAEs, the images sampled from the trained WAE models are of better quality, without compromising the stability of training and the quality of reconstruction.
Future work will include further exploration of the criteria for matching the encoded distribution $Q_Z$ to the prior distribution $P_Z$, assaying the possibility of adversarially training the cost function $c$ in the input space $\X$, and a theoretical analysis of the dual formulations for WAE-GAN and WAE-MMD.

\subsubsection*{Acknowledgments}
The authors are thankful to Carl Johann Simon-Gabriel, Mateo Rojas-Carulla, Arthur Gretton, Paul Rubenstein, and Fei Sha for stimulating discussions.
The authors also thank Josip Djolonga, Carlos Riquelme, and Paul Rubenstein for carrying out extended experimental evaluations (bigWAE and bigVAE) reported in Table \ref{t:fid} and Section \ref{appendix:exps_extended}.

\bibliographystyle{unsrt}
\bibliography{arxiv2017wae}

\begin{thebibliography}{10}

\bibitem{KW14}
D.~P. Kingma and M.~Welling.
\newblock Auto-encoding variational {B}ayes.
\newblock In {\em ICLR}, 2014.

\bibitem{MSJ+16}
A.~Makhzani, J.~Shlens, N.~Jaitly, and I.~Goodfellow.
\newblock Adversarial autoencoders.
\newblock In {\em ICLR}, 2016.

\bibitem{goodfellow2014generative}
Ian Goodfellow, Jean Pouget-Abadie, Mehdi Mirza, Bing Xu, David Warde-Farley,
  Sherjil Ozair, Aaron Courville, and Yoshua Bengio.
\newblock Generative adversarial nets.
\newblock In {\em NIPS}, pages 2672--2680, 2014.

\bibitem{BGT+17}
O.~Bousquet, S.~Gelly, I.~Tolstikhin, C.~J. Simon-Gabriel, and
  B.~Sch{\"{o}}lkopf.
\newblock From optimal transport to generative modeling: the {VEGAN} cookbook,
  2017.

\bibitem{AB17}
M.~Arjovsky, S.~Chintala, and L.~Bottou.
\newblock Wasserstein {GAN}, 2017.

\bibitem{V03}
C.~Villani.
\newblock {\em Topics in Optimal Transportation}.
\newblock AMS Graduate Studies in Mathematics, 2003.

\bibitem{nowozin2016f}
Sebastian Nowozin, Botond Cseke, and Ryota Tomioka.
\newblock f-{GAN}: Training generative neural samplers using variational
  divergence minimization.
\newblock In {\em NIPS}, 2016.

\bibitem{GAA+17}
I.~Gulrajani, F.~Ahmed, M.~Arjovsky, V.~Domoulin, and A.~Courville.
\newblock Improved training of wasserstein {GAN}s, 2017.

\bibitem{Gretton12}
A.~Gretton, K.~M. Borgwardt, M.~J. Rasch, B.~Sch\"{o}lkopf, and A.~J. Smola.
\newblock A kernel two-sample test.
\newblock {\em Journal of Machine Learning Research}, 13:723--773, 2012.

\bibitem{LM08}
F.~Liese and K.-J. Miescke.
\newblock {\em Statistical Decision Theory}.
\newblock Springer, 2008.

\bibitem{MNG17}
L.~Mescheder, S.~Nowozin, and A.~Geiger.
\newblock Adversarial variational bayes: Unifying variational autoencoders and
  generative adversarial networks, 2017.

\bibitem{BCV13}
Y.~Bengio, A.~Courville, and P.~Vincent.
\newblock Representation learning: A review and new perspectives.
\newblock {\em Pattern Analysis and Machine Intelligence}, 35, 2013.

\bibitem{HJ16}
M.~D. Hoffman and M.~Johnson.
\newblock Elbo surgery: yet another way to carve up the variational evidence
  lower bound.
\newblock In {\em NIPS Workshop on Advances in Approximate Bayesian Inference},
  2016.

\bibitem{ZSE17}
S.~Zhao, J.~Song, and S.~Ermon.
\newblock {InfoVAE}: Information maximizing variational autoencoders, 2017.

\bibitem{GCPB16}
A.~Genevay, M.~Cuturi, G.~{Peyr\'e}, and F.~R. Bach.
\newblock Stochastic optimization for large-scale optimal transport.
\newblock In {\em Advances in Neural Information Processing Systems}, pages
  3432--3440, 2016.

\bibitem{C13}
M.~Cuturi.
\newblock Sinkhorn distances: Lightspeed computation of optimal transport.
\newblock In {\em Advances in Neural Information Processing Systems}, pages
  2292--2300, 2013.

\bibitem{chizat2015unbalanced}
Lenaic Chizat, Gabriel Peyr{\'e}, Bernhard Schmitzer, and Fran{\c{c}}ois-Xavier
  Vialard.
\newblock Unbalanced optimal transport: geometry and kantorovich formulation.
\newblock {\em arXiv preprint arXiv:1508.05216}, 2015.

\bibitem{liero2015optimal}
Matthias Liero, Alexander Mielke, and Giuseppe Savar{\'e}.
\newblock Optimal entropy-transport problems and a new hellinger-kantorovich
  distance between positive measures.
\newblock {\em arXiv preprint arXiv:1508.07941}, 2015.

\bibitem{EBGAN17}
J.~Zhao, M.~Mathieu, and Y.~LeCun.
\newblock Energy-based generative adversarial network.
\newblock In {\em ICLR}, 2017.

\bibitem{ALI17}
V.~Dumoulin, I.~Belghazi, B.~Poole, A.~Lamb, M.~Arjovsky, O.~Mastropietro, and
  A.~Courville.
\newblock Adversarially learned inference.
\newblock In {\em ICLR}, 2017.

\bibitem{AGE17}
D.~Ulyanov, A.~Vedaldi, and V.~Lempitsky.
\newblock It takes (only) two: Adversarial generator-encoder networks, 2017.

\bibitem{BEGAN17}
D.~Berthelot, T.~Schumm, and L.~Metz.
\newblock Began: Boundary equilibrium generative adversarial networks, 2017.

\bibitem{LSZ15}
Y.~Li, K.~Swersky, and R.~Zemel.
\newblock Generative moment matching networks.
\newblock In {\em ICML}, 2015.

\bibitem{DRG15}
G.~K. Dziugaite, D.~M. Roy, and Z.~Ghahramani.
\newblock Training generative neural networks via maximum mean discrepancy
  optimization.
\newblock In {\em UAI}, 2015.

\bibitem{RRS15}
R.~Reddi, A.~Ramdas, A.~Singh, B.~Poczos, and L.~Wasserman.
\newblock On the high-dimensional power of a linear-time two sample test under
  mean-shift alternatives.
\newblock In {\em AISTATS}, 2015.

\bibitem{MMDGAN17}
C.~L. Li, W.~C. Chang, Y.~Cheng, Y.~Yang, and B.~Poczos.
\newblock Mmd gan: Towards deeper understanding of moment matching network,
  2017.

\bibitem{mnist}
Y.~LeCun, L.~Bottou, Y.~Bengio, and P.~Haffner.
\newblock Gradient-based learning applied to document recognition.
\newblock In {\em Proceedings of the IEEE}, volume 86(11), pages 2278--2324,
  1998.

\bibitem{celeba2015}
Ziwei Liu, Ping Luo, Xiaogang Wang, and Xiaoou Tang.
\newblock Deep learning face attributes in the wild.
\newblock In {\em Proceedings of International Conference on Computer Vision
  (ICCV)}, 2015.

\bibitem{KL14}
D.~P. Kingma and J.~Lei.
\newblock Adam: A method for stochastic optimization, 2014.

\bibitem{RMC16}
A.~Radford, L.~Metz, and S.~Chintala.
\newblock Unsupervised representation learning with deep convolutional
  generative adversarial networks.
\newblock In {\em ICLR}, 2016.

\bibitem{IS15}
S.~Ioffe and C.~Szegedy.
\newblock Batch normalization: Accelerating deep network training by reducing
  internal covariate shift, 2015.

\bibitem{heusel2017gans}
Martin Heusel, Hubert Ramsauer, Thomas Unterthiner, Bernhard Nessler,
  G{\"u}nter Klambauer, and Sepp Hochreiter.
\newblock {GAN}s trained by a two time-scale update rule converge to a nash
  equilibrium.
\newblock {\em arXiv preprint arXiv:1706.08500}, 2017.

\bibitem{He2016}
K.~He, X.~Zhang, S.~Ren, and J.~Sun.
\newblock Identity mappings in deep residual networks.
\newblock In {\em ECCV}, 2016.

\bibitem{PAS+16}
B.~Poole, A.~Alemi, J.~Sohl{-}Dickstein, and A.~Angelova.
\newblock Improved generator objectives for {GAN}s, 2016.

\end{thebibliography}

\newpage
\appendix 
\section{Implicit generative models: a short tour of GANs and VAEs}
\label{appendix:vae+gan}
Even though GANs and VAEs are quite different---both in terms of the conceptual frameworks and empirical performance---they share important features: (a) both can be trained by sampling from the model $P_G$ without knowing an analytical form of its density and (b) both can be scaled up with SGD.
As a result, it becomes possible to use highly flexible \emph{implicit} models $P_G$ defined by a two-step procedure, where first a code $Z$ is sampled from a fixed distribution $P_Z$ on a latent space $\Z$ and then $Z$ is mapped to the image $G(Z)\in\X=\R^d$ with a (possibly random) transformation $G\colon\Z\to\X$.
This results in \emph{latent variable models} $P_G$ of the form \eqref{eq:latent-var}.

These models are indeed easy to sample and, provided $G$ can be differentiated analytically with~respect to its parameters, $P_G$ can be trained with SGD.
The field is growing rapidly and numerous variations of VAEs and GANs are available in the literature. 
Next we introduce and compare several of them.

The original {\bf generative adversarial network} (GAN) \cite{goodfellow2014generative} approach minimizes
\begin{equation}
\label{eq:GAN}
D_\GAN(P_X, P_G) = 
\sup_{T\in\mathcal{T}}
\E_{X\sim P_X}[\log T(X)] + \E_{Z\sim P_Z}\bigl[\log\bigl(1 - T(G(Z))\bigr)\bigr]
\end{equation}
with respect to a deterministic \emph{decoder} $G\colon\Z\to\X$, where $\mathcal{T}$ is any non-parametric class of choice.
It is known that $D_\GAN(P_X, P_G)\leq2\cdot D_\JS(P_X, P_G) - \log(4)$ and the inequality turns into identity in the \emph{nonparametric limit}, that is when the class $\mathcal{T}$ becomes rich enough to represent \emph{all} functions mapping $\X$ to $(0,1)$.
Hence, GANs are \emph{minimizing a lower bound} on the JS-divergence.
However, GANs are not only linked to the JS-divergence: the $f$-GAN approach \cite{nowozin2016f} showed that a slight modification $D_\fGAN$ of the objective \eqref{eq:GAN} allows to lower bound any desired $f$-divergence in a similar way.
In practice, both decoder $G$ and \emph{discriminator} $T$ are trained in alternating SGD steps.
Stopping criteria as well as adequate evaluation of the trained GAN models remain open questions.

Recently, the authors of \cite{AB17} argued that the 1-Wasserstein
distance $W_{1}$, which is known to induce a much weaker topology than
$D_\JS$, may be better suited for generative modeling.  When $P_X$ and
$P_G$ are supported on largely disjoint low-dimensional manifolds
(which may be the case in applications), $D_\KL$, $D_\JS$, and other
strong distances between $P_X$ and $P_G$ max out and no longer provide
useful gradients for $P_G$.  This ``vanishing gradient'' problem
necessitates complicated scheduling between the $G$/$T$ updates.
In~contrast, $W_{1}$ is still sensible in these cases and provides
stable gradients.  The {\bf Wasserstein GAN} (WGAN) minimizes
\[
D_\WGAN(P_X, P_G) = 
\sup_{T\in\mathcal{W}}
\E_{X\sim P_X}[T(X)] - \E_{Z \sim P_Z}\bigl[ T(G(Z))\bigr],
\]
where $\mathcal{W}$ is any subset of 1-Lipschitz functions on $\X$.
It follows from \eqref{eq:KRD} that $D_\WGAN(P_X, P_G) \leq W_{1}(P_X, P_G)$ and thus WGAN is \emph{minimizing a lower bound} on the 1-Wasserstein distance.

{\bf Variational auto-encoders} (VAE) \cite{KW14} utilize models $P_G$ of the form \eqref{eq:latent-var}
and minimize
\begin{equation}
\label{eq:VAE}
D_\VAE(P_X,P_G) = 
\inf_{Q(Z|X)\in\mathcal{Q}} 
\E_{P_X}\left[
D_\KL\bigl(Q(Z|X), P_Z\bigr)
-
\E_{Q(Z|X)}[\log p_G(X|Z)]
\,
\right]
\end{equation}
with respect to a random \emph{decoder} mapping $P_G(X|Z)$.
The conditional distribution $P_G(X|Z)$ is often parametrized by a deep net $G$ and can have any form as long as its density $p_G(x|z)$ can be computed and differentiated with respect to the parameters of $G$.
A typical choice is to use Gaussians $P_G(X|Z) = \mathcal{N}(X;G(Z), \sigma^2\cdot I)$.
If $\mathcal{Q}$ is the set of \emph{all} conditional probability distributions $Q(Z|X)$, 
the objective of VAE coincides with the negative marginal log-likelihood $D_\VAE(P_X, P_G) = -\E_{P_X}[\log P_G(X)]$.
However, in order to make the $D_\KL$ term of~\eqref{eq:VAE} tractable in closed form,
the original implementation of VAE uses a standard normal $P_Z$ and restricts $\mathcal{Q}$ to a class of Gaussian distributions $Q(Z|X) = \mathcal{N}\bigl(Z; \mu(X), \Sigma(X)\bigr)$ with mean $\mu$ and diagonal covariance~$\Sigma$ parametrized by deep nets.
As a consequence, VAE is \emph{minimizing an upper bound} on the negative log-likelihood or, equivalently, on the KL-divergence $D_\KL(P_X, P_G)$.

One possible way to reduce the gap between the true negative log-likelihood and the upper bound provided by $D_\VAE$ is to enlarge the class $\mathcal{Q}$.
{\bf Adversarial variational Bayes} (AVB)~\cite{MNG17} follows this argument by employing the idea of GANs.
Given any point $x\in\X$, a noise $\epsilon\sim\mathcal{N}(0,1)$, and any fixed transformation $e\colon \X\times\R\to\Z$, a random variable $e(x, \epsilon)$ implicitly defines one particular conditional distribution $Q_e(Z|X=x)$.
AVB allows $\mathcal{Q}$ to contain all such distributions for different choices of $e$,
replaces the intractable term $D_\KL\bigl(Q_e(Z|X), P_Z\bigr)$ in \eqref{eq:VAE} by the adversarial approximation $D_\fGAN$ corresponding to the KL-divergence, and proposes to minimize\footnote{
The authors of AVB \cite{MNG17} note that using $f$-GAN as described above actually results in ``unstable training''.
Instead, following the approach of \cite{PAS+16}, they use a trained discriminator $T^*$ resulting from the $D_\GAN$ objective~\eqref{eq:GAN} to approximate the ratio of densities and then directly estimate the KL divergence $\int f\bigl(p(x)/q(x)\bigr)q(x)dx$.}
\begin{equation}
\label{eq:obj-AVB}
D_\AVB(P_X, P_G) = 
\inf_{Q_e(Z|X)\in\mathcal{Q}} 
\E_{P_X}\left[
D_\fGAN\bigl(Q_e(Z|X), P_Z\bigr)
-
\E_{Q_e(Z|X)}[\log p_G(X|Z)]\,
\right].
\end{equation}

The $D_\KL$ term in \eqref{eq:VAE} may be viewed as a regularizer. 
Indeed, VAE reduces to the classical unregularized auto-encoder if this term is dropped, minimizing the reconstruction cost of the encoder-decoder pair $Q(Z|X), P_G(X|Z)$. 
This often results in different training points being encoded into non-overlapping zones chaotically scattered all across the $\Z$ space with ``holes'' in between where the decoder mapping $P_G(X|Z)$ has never been trained.
Overall, the encoder $Q(Z|X)$ trained in this way does not provide a useful representation and sampling from the latent space $\Z$ becomes hard \cite{BCV13}.

{\bf Adversarial auto-encoders} (AAE) \cite{MSJ+16}  
replace the $D_\KL$ term in~\eqref{eq:VAE} with another regularizer:
\begin{equation}
\label{eq:obj-AAE}
D_\AAE(P_X, P_G) = 
\inf_{Q(Z|X)\in\mathcal{Q}} 
D_\GAN(Q_Z, P_Z)
-
\E_{P_X}\E_{Q(Z|X)}[\log p_G(X|Z)],
\end{equation}
where $Q_Z$ is the marginal distribution of $Z$ when first $X$ is sampled from $P_X$ and then $Z$ is sampled from $Q(Z|X)$, also known as the \emph{aggregated posterior} \cite{MSJ+16}.
Similarly to AVB, there is no clear link to log-likelihood, as $D_\AAE \leq D_{\AVB}$.
The~authors of~\cite{MSJ+16} argue that matching  $Q_Z$ to $P_Z$ in this way ensures that there are no ``holes'' left in the latent space $\Z$ and $P_G(X|Z)$ generates reasonable samples whenever $Z\sim P_Z$.
They also report an equally good performance of different types of conditional distributions $Q(Z|X)$, including Gaussians as used in VAEs, implicit models $Q_e$ as used in AVB, and \emph{deterministic} encoder mappings, i.e.\:$Q(Z| X) = \delta_{\mu(X)}$ with $\mu\colon\X\to\Z$.

\section{Proof of Theorem \ref{thm:main} and further details}
\label{appendix:proofs}

We will consider certain sets of joint probability distributions of three random variables $(X,Y,Z)\in\X\times\X\times\Z$.
The reader may wish to think of $X$ as true images, $Y$ as images sampled from the model, and $Z$ as latent codes.
We denote by $P_{G,Z}(Y,Z)$ a joint distribution of a variable pair $(Y,Z)$, where $Z$ is first sampled from $P_Z$ and next $Y$ from $P_G(Y|Z)$.
Note that $P_G$ defined in \eqref{eq:latent-var} and used throughout this work is the marginal distribution of $Y$ when $(Y,Z)\sim P_{G,Z}$.

In the optimal transport problem \eqref{eq:ot}, we consider joint distributions $\Gamma(X,Y)$ which are
called~\emph{couplings} between values of $X$ and~$Y$. Because of the marginal constraint, we can
write $\Gamma(X,Y)=\Gamma(Y|X)P_X(X)$ and we can consider $\Gamma(Y|X)$ as a non-deterministic mapping from $X$ to $Y$.
Theorem \ref{thm:main}. shows how to~\emph{factor} this mapping through $\Z$, i.e., decompose it
into an encoding distribution $Q(Z|X)$ and the generating distribution $P_G(Y|Z)$.

As in Section~\ref{sec:otanddual}, $\mathcal{P}(X\sim P_X, Y\sim P_G)$ denotes the set of all joint distributions of $(X,Y)$ with marginals $P_X,P_G$, and likewise for $\mathcal{P}(X\sim P_X, Z\sim P_Z)$.
The set of all joint distributions of $(X,Y,Z)$ such that $X\sim P_X$, $(Y,Z)\sim P_{G,Z}$, and $(Y\independent X)|Z$ will be denoted by~$\mathcal{P}_{X,Y,Z}$.
Finally, we denote by $\mathcal{P}_{X,Y}$ and $\mathcal{P}_{X,Z}$ the sets of marginals on $(X,Y)$ and $(X,Z)$ (respectively) induced by distributions in $\mathcal{P}_{X,Y,Z}$. Note that $\mathcal{P}(P_X,P_G)$, $\mathcal{P}_{X,Y,Z}$, and $\mathcal{P}_{X,Y}$ depend on the choice of conditional distributions $P_G(Y|Z)$, while $\mathcal{P}_{X,Z}$ does not.
In fact, it is easy to check that $\mathcal{P}_{X,Z} = \mathcal{P}(X\sim P_X, Z\sim P_Z)$.
From the definitions it is clear that $\mathcal{P}_{X,Y} \subseteq\mathcal{P}(P_X, P_G)$ and we immediately get the following upper bound:
\begin{equation}
\label{eq:thm-main-ub}
W_c(P_X, P_G) 
\leq
W_c^\dagger(P_X, P_G)
\bydef \inf_{P\in \mathcal{P}_{X,Y}} \ee{(X, Y)\sim P}{c(X,Y)}.
\end{equation}
If $P_G(Y|Z)$~are~Dirac measures (i.e., $Y=G(Z)$), it turns out that $\mathcal{P}_{X,Y} =\mathcal{P}(P_X, P_G)$:
\begin{lemma}\label{lemma:main}
$\mathcal{P}_{X,Y} \subseteq \mathcal{P}(P_X,P_G)$ with identity if\,\footnote{We conjecture that this is also a necessary condition. The necessity is not used in the paper.} $P_G(Y|Z=z)$ are Dirac for all $z\in\Z$.
\end{lemma}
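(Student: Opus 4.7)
The inclusion $\mathcal{P}_{X,Y}\subseteq\mathcal{P}(P_X,P_G)$ is almost immediate from the definitions: any $P\in\mathcal{P}_{X,Y}$ arises as the $(X,Y)$-marginal of some $\tilde P\in\mathcal{P}_{X,Y,Z}$, and the defining constraints $X\sim P_X$ and $(Y,Z)\sim P_{G,Z}$ force the $X$- and $Y$-marginals of $P$ to be $P_X$ and $P_G$ respectively. So the content of the lemma is the reverse inclusion $\mathcal{P}(P_X,P_G)\subseteq\mathcal{P}_{X,Y}$ under the assumption that $P_G(Y|Z=z)=\delta_{G(z)}$ for all $z$.

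The plan is to take an arbitrary coupling $\Gamma\in\mathcal{P}(P_X,P_G)$ and \emph{lift} it to a joint on $\X\times\X\times\Z$ by inserting the latent variable $Z$. Concretely, let $P_{G,Z}(Z|Y)$ denote the disintegration of $P_{G,Z}(Y,Z)$ with respect to its $Y$-marginal $P_G$ (so that $P_{G,Z}(Y,Z)=P_G(Y)\,P_{G,Z}(Z|Y)$; the Dirac assumption guarantees that $P_{G,Z}(Z|Y=y)$ is supported on $\{z: G(z)=y\}$, so in particular $G(Z)=Y$ almost surely under this conditional). Define the candidate three-variable distribution
\[
P(X,Y,Z) \bydef \Gamma(X,Y)\,P_{G,Z}(Z|Y).
\]

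Next I would verify that $P\in\mathcal{P}_{X,Y,Z}$ and that its $(X,Y)$-marginal equals $\Gamma$. Integrating out $Z$ gives back $\Gamma(X,Y)$, which both recovers the correct $X$-marginal $P_X$ (since $\Gamma$ has $X$-marginal $P_X$) and yields the desired reconstruction of $\Gamma$ as the $(X,Y)$-marginal. Integrating out $X$ gives $P_G(Y)\,P_{G,Z}(Z|Y)=P_{G,Z}(Y,Z)$, so the $(Y,Z)$-marginal matches $P_{G,Z}$. Finally, conditional independence $(Y\independent X)\mid Z$ is free: under $P$, $Y$ is $P(\,\cdot\,|Z)$-almost surely equal to $G(Z)$ (this is precisely where the Dirac hypothesis is used, via the support property of $P_{G,Z}(Z|Y)$), so $Y$ is a deterministic function of $Z$ and any further conditioning on $X$ cannot change its law.

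The main obstacle is the measure-theoretic bookkeeping around the disintegration $P_{G,Z}(Z|Y)$: one needs $(\Z,P_Z)$ to be nice enough (e.g.\ Polish with Borel $\sigma$-algebra) for a regular conditional distribution to exist, and one must justify the almost-sure identity $G(Z)=Y$ under $P_{G,Z}(Z|Y=y)$. Once this is in place, the argument reduces to checking marginals, and the conditional independence requirement is essentially a tautology because collapsing $P_G(Y|Z)$ to a point mass makes $Y$ a function of $Z$. The footnoted conjecture about necessity is not needed here, so I would not attempt it.
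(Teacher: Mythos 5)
Your proof is correct and takes essentially the same route as the paper: the paper's (very terse) argument likewise reduces the reverse inclusion to the observation that when $Y=G(Z)$ is deterministic, $\E[\oo{Y\in A}\mid X,Z]=\E[\oo{Y\in A}\mid Z]$, so the conditional independence constraint holds automatically for any lift of a coupling $\Gamma\in\mathcal{P}(P_X,P_G)$. Your explicit construction $P(X,Y,Z)=\Gamma(X,Y)\,P_{G,Z}(Z\mid Y)$ and the marginal checks are precisely the details the paper leaves implicit, together with the (correctly flagged) regularity needed for the disintegration to exist.
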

\begin{proof}
The first assertion is obvious.
To prove the identity, note that when $Y$ is a deterministic function of~$Z$, for any $A$ in the sigma-algebra induced by $Y$ we have $\e{\oo{Y\in A}|X,Z}=\e{\oo{Y\in A}|Z}$. This implies $(Y\independent X)|Z$ and concludes the proof.
\end{proof}

We are now in place to prove Theorem \ref{thm:main}.
Lemma \ref{lemma:main} obviously leads to
\[
W_c(P_X, P_G) 
=
W_c^\dagger(P_X, P_G).
\]
The tower rule of expectation, and the conditional independence property of $\mathcal{P}_{X,Y,Z}$ implies
\begin{align*}
W_c^\dagger(P_X, P_G)
&=
\inf_{P\in \mathcal{P}_{X,Y,Z}} \ee{(X, Y,Z)\sim P}{c(X,Y)}\\
&=
\inf_{P\in \mathcal{P}_{X,Y,Z}} \E_{P_Z} \E_{X\sim P(X|Z)} \E_{Y\sim P(Y|Z)} [c(X,Y)]\\
&=
\inf_{P\in \mathcal{P}_{X,Y,Z}} \E_{P_Z} \E_{X\sim P(X|Z)} \bigl[c\bigl(X,G(Z)\bigr)\bigr]\\
&=
\inf_{P\in \mathcal{P}_{X,Z}} \E_{(X,Z)\sim P} \bigl[c\bigl(X,G(Z)\bigr)\bigr].
\end{align*}
It remains to notice that $\mathcal{P}_{X,Z} = \mathcal{P}(X\sim P_X, Z\sim P_Z)$ as stated earlier.

\subsection{Random decoders $P_G(Y|Z)$}
\label{appendix:random-decoders}
If the decoders are non-deterministic, Lemma \ref{lemma:main} provides only the inclusion of sets ${\mathcal{P}_{X,Y} \subseteq \mathcal{P}(P_X,P_G)}$ and we get the following upper bound on the OT:
\begin{corollary}
\label{corr:gauss}
Let $\X=\R^d$ and assume the conditional distributions $P_G(Y|Z=z)$ have mean values $G(z)\in\R^d$ and marginal variances $\sigma_1^2,\dots,\sigma_d^2\geq0$ for all $z\in\Z$, where $G\colon \Z \to \X$. 
Take $c(x,y) = \|x - y\|^2_2$. Then
\begin{equation}
\label{eq:upper-bound-gauss}
W_c(P_X, P_G)
\leq
W_c^\dagger(P_X, P_G)
=
\sum_{i=1}^d \sigma^2_i
+
\inf_{P\in \mathcal{P}(X\sim P_X, Z\sim P_Z)} \E_{(X,Z)\sim P}\bigl[\| X - G(Z)\|^2\bigr].
\end{equation}
\end{corollary}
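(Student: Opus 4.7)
The plan is to mimic the proof of Theorem~\ref{thm:main} and just track the extra variance term that appears when the conditionals $P_G(Y|Z)$ are no longer Dirac. The inequality $W_c(P_X,P_G)\leq W_c^\dagger(P_X,P_G)$ is immediate from Lemma~\ref{lemma:main}, which gives $\mathcal{P}_{X,Y}\subseteq\mathcal{P}(P_X,P_G)$ without any assumption on the decoder. So all the work is in establishing the equality for $W_c^\dagger$.

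I would start from the three-variable reformulation already derived in the main proof,
\[
W_c^\dagger(P_X,P_G)=\inf_{P\in\mathcal{P}_{X,Y,Z}}\E_{P_Z}\,\E_{X\sim P(X|Z)}\,\E_{Y\sim P(Y|Z)}\bigl[\|X-Y\|_2^2\bigr],
\]
which is valid because the conditional independence $(Y\independent X)\mid Z$ in $\mathcal{P}_{X,Y,Z}$ is independent of whether $P_G(Y|Z)$ is Dirac. The next step is the one that differs from Theorem~\ref{thm:main}: condition on $(X,Z)=(x,z)$ and apply the standard bias-variance identity to the inner $Y$-expectation. Since $P_G(Y|Z=z)$ has mean $G(z)$ and coordinate variances $\sigma_1^2,\dots,\sigma_d^2$, we have
\[
\E_{Y\sim P_G(Y|Z=z)}\bigl[\|x-Y\|_2^2\bigr]=\|x-G(z)\|_2^2+\sum_{i=1}^d\sigma_i^2,
\]
and this constant can be pulled outside both expectations and the infimum.

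What remains is to recognize that the residual objective $\E_{P_Z}\E_{X\sim P(X|Z)}[\|X-G(Z)\|^2]$ depends on $P\in\mathcal{P}_{X,Y,Z}$ only through its $(X,Z)$-marginal, so the infimum over $\mathcal{P}_{X,Y,Z}$ coincides with the infimum over $\mathcal{P}_{X,Z}$. Invoking the identification $\mathcal{P}_{X,Z}=\mathcal{P}(X\sim P_X,Z\sim P_Z)$ already noted before Lemma~\ref{lemma:main} then finishes the proof.

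The only subtlety I anticipate is justifying the bias-variance split and the Fubini-style pulling of $\sum_i\sigma_i^2$ out of the nested expectations when the $\sigma_i$ may be understood as essential suprema or when $P_G(Y|Z=z)$ is only assumed to have those mean/variance parameters rather than a fully specified form; as long as the variances are uniform in $z$ (as the statement asserts), this is a routine integrability check. All other steps are verbatim transcriptions of the deterministic-decoder argument.
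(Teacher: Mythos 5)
Your proposal is correct and follows essentially the same route as the paper's proof: the inequality comes from the inclusion $\mathcal{P}_{X,Y}\subseteq\mathcal{P}(P_X,P_G)$, and the equality is obtained from the three-variable reformulation followed by the bias--variance decomposition of the inner $Y$-expectation (the cross term vanishing because $G(z)$ is the conditional mean) and the identification $\mathcal{P}_{X,Z}=\mathcal{P}(X\sim P_X,Z\sim P_Z)$. No substantive differences from the paper's argument.
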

\begin{proof}
First inequality follows from \eqref{eq:thm-main-ub}.
For the identity we proceed similarly to the proof of Theorem \ref{thm:main} and write
\begin{equation}
\label{eq:proof-gauss}
W_c^\dagger(P_X, P_G)
=
\inf_{P\in \mathcal{P}_{X,Y,Z}} \E_{P_Z} \E_{X\sim P(X|Z)} \E_{Y\sim P(Y|Z)} \bigl[\|X - Y\|^2\bigr].
\end{equation}
Note that
\begin{align*}
&\E_{Y\sim P(Y|Z)} \bigl[\|X - Y\|^2\bigr]
=
\E_{Y\sim P(Y|Z)} \bigl[\|X - G(Z) + G(Z) - Y\|^2\bigr]\\
&=
\|X - G(Z)\|^2
+
\E_{Y\sim P(Y|Z)} \bigl[\langle
X - G(Z), G(Z) - Y
\rangle\bigr]
+
\E_{Y\sim P(Y|Z)}\|G(Z) - Y\|^2\\
&=
\|X - G(Z)\|^2 + \sum_{i=1}^d \sigma_i^2.
 \end{align*}
Together with \eqref{eq:proof-gauss} and the fact that $\mathcal{P}_{X,Z}=\mathcal{P}(X\sim P_X, Z\sim P_Z)$ this concludes the proof.
\end{proof}

\section{Further details on experiments}
\label{appendix:exps}
\subsection{MNIST} We use mini-batches of size 100 and trained the models for 100 epochs. 
We used $\lambda = 10$ and $\sigma^2_z=1$. For the encoder-decoder pair we set $\alpha = 10^{-3}$ for Adam in the beginning and for the adversary in WAE-GAN to $\alpha = 5 \times 10^{-4}$.
After 30 epochs we decreased both by factor of 2, and after first 50 epochs further by factor of 5.

Both encoder and decoder used fully convolutional architectures with 4x4 convolutional filters.

Encoder architecture: 
\begin{align*}
x\in\R^{28 \times 28} &\to \mathrm{Conv}_{128} \to \mathrm{BN} \to \mathrm{ReLU}\\
&\to \mathrm{Conv}_{256} \to \mathrm{BN} \to \mathrm{ReLU}\\
&\to \mathrm{Conv}_{512} \to \mathrm{BN} \to \mathrm{ReLU}\\
&\to \mathrm{Conv}_{1024} \to \mathrm{BN} \to \mathrm{ReLU} \to \mathrm{FC}_8
\end{align*}

Decoder architecture: 
\begin{align*}
z \in \R^8 &\to \mathrm{FC}_{7 \times 7 \times 1024}\\
&\to \mathrm{FSConv}_{512} \to \mathrm{BN} \to \mathrm{ReLU}\\
&\to \mathrm{FSConv}_{256} \to \mathrm{BN} \to \mathrm{ReLU} \to \mathrm{FSConv}_{1}
\end{align*}

Adversary architecture for WAE-GAN:
\begin{align*}
z \in \R^8
&\to \mathrm{FC}_{512}  \to \mathrm{ReLU}\\
&\to \mathrm{FC}_{512}  \to \mathrm{ReLU}\\
&\to \mathrm{FC}_{512}  \to \mathrm{ReLU}\\
&\to \mathrm{FC}_{512}  \to \mathrm{ReLU} \to \mathrm{FC}_{1}
\end{align*}

Here $\mathrm{Conv}_{k}$ stands for a convolution with $k$ filters, $\mathrm{FSConv}_{k}$ for the fractional strided convolution with $k$ filters (first two of them were doubling the resolution, the third one kept it constant), $\mathrm{BN}$ for the batch normalization, $\mathrm{ReLU}$ for the rectified linear units, and $\mathrm{FC_k}$ for the fully connected layer mapping to $\R^k$.
All the convolutions in the encoder used vertical and horizontal strides 2 and \texttt{SAME} padding.

Finally, we used two heuristics. 
First, we always pretrained separately the encoder for several mini-batch steps before the main training stage so that the sample mean and covariance of $Q_Z$ would try to match those of $P_Z$.
Second, while training we were adding a pixel-wise Gaussian noise truncated at $0.01$ to all the images before feeding them to the encoder, which was meant to make the encoders random.
We played with all possible ways of combining these two heuristics and noticed that together they result in \emph{slightly} (almost negligibly) better results compared to using only one or none of them.

Our VAE model used cross-entropy loss (Bernoulli decoder) and otherwise same architectures and hyperparameters as listed above.

\subsection{CelebA} 
We pre-processed CelebA images by first taking a 140x140 center crops and then resizing to the 64x64 resolution.  
We used mini-batches of size 100 and trained the models for various number of epochs (up to 250).
All reported WAE models were trained for 55 epochs and VAE for 68 epochs.
For WAE-MMD we used $\lambda = 100$ and for WAE-GAN $\lambda=1$.
Both used $\sigma^2_z = 2$.

For WAE-MMD the learning rate of Adam was initially set to $\alpha = 10^{-3}$.
For WAE-GAN the learning rate of Adam for the encoder-decoder pair was initially set to $\alpha = 3\times 10^{-4}$ and for the adversary to $10^{-3}$.
All learning rates were decreased by factor of 2 after 30 epochs, further by factor of 5 after 50 first epochs, and finally additional factor of 10 after 100 first epochs.

Both encoder and decoder used fully convolutional architectures with 5x5 convolutional filters.

Encoder architecture: 
\begin{align*}
x\in\R^{64 \times 64 \times 3} &\to \mathrm{Conv}_{128} \to \mathrm{BN} \to \mathrm{ReLU}\\
&\to \mathrm{Conv}_{256} \to \mathrm{BN} \to \mathrm{ReLU}\\
&\to \mathrm{Conv}_{512} \to \mathrm{BN} \to \mathrm{ReLU}\\
&\to \mathrm{Conv}_{1024} \to \mathrm{BN} \to \mathrm{ReLU} \to \mathrm{FC}_{64}
\end{align*}

Decoder architecture: 
\begin{align*}
z \in \R^{64} &\to \mathrm{FC}_{8 \times 8 \times 1024}\\
&\to \mathrm{FSConv}_{512} \to \mathrm{BN} \to \mathrm{ReLU}\\
&\to \mathrm{FSConv}_{256} \to \mathrm{BN} \to \mathrm{ReLU}\\
&\to \mathrm{FSConv}_{128} \to \mathrm{BN} \to \mathrm{ReLU} \to \mathrm{FSConv}_{1}
\end{align*}

Adversary architecture for WAE-GAN:
\begin{align*}
z \in \R^{64}
&\to \mathrm{FC}_{512}  \to \mathrm{ReLU}\\
&\to \mathrm{FC}_{512}  \to \mathrm{ReLU}\\
&\to \mathrm{FC}_{512}  \to \mathrm{ReLU}\\
&\to \mathrm{FC}_{512}  \to \mathrm{ReLU} \to \mathrm{FC}_{1}
\end{align*}

For WAE-GAN we used a heuristic proposed in Supplementary IV of \cite{MNG17}.
Notice that the theoretically optimal discriminator would result in $D^*(z) = \log p_Z(z) - \log q_Z(z)$, where $p_Z$ and $q_Z$ are densities of $P_Z$ and $Q_Z$ respectively.
In our experiments we added the log prior $\log p_Z(z)$ explicitly to the adversary output as we know it analytically. 
This should hopefully make it easier for the adversary to learn the remaining $Q_Z$ density term.

VAE model used squared loss, i.e. Gaussian decoders $P_G(X|Z) = {\mathcal{N}\bigl(X; G_\theta(Z), \sigma^2_G\cdot \boldsymbol{I}_d\bigr)}$ with $\sigma^2_G=0.3$. 
We also tried using Bernoulli decoders (the cross-entropy loss) and observed that they matched the performance of the Gaussian decoder with the best choice of $\sigma^2_G$.
We decided to report the VAE model which used the same reconstruction loss as our WAE models, i.e.\:the squared loss.
VAE model used $\alpha = 10^{-4}$ as the initial Adam learning rate and did not use batch normalization.
Otherwise all the architectures and hyperparameters were as explained above.

\section{Extended experimental setup (bigVAE and bigWAE)}
\label{appendix:exps_extended}

In this section we will shortly report the experimental results we got while running a larger scale study, which involved a \emph{much wider} hyperparameter sweep, as well as using ResNet50-v2 \cite{He2016} architecture for the encoder and decoder mappings.

\paragraph{Experimental setup}
In total we trained \emph{over 3 thousand} WAE-GAN, WAE-MMD, and VAE models with various hyperparameters while making sure each one of the three algorithms gets exactly the same computational budget.
Each model was trained on 8 Google Cloud
TPU-v2 hardware accelerators for 100\,000 mini-batch steps with Adam using the same default parameters as reported in the previous section.
For each new model that we trained we selected a hyperparameters configuration \emph{randomly} (from the ranges listed below) and trained each configuration with \emph{three different random seeds}.
\begin{itemize}
\item
The dimensionality $d_z$ of the latent space was chosen uniformly from $\{16, 32, 64, 128, 256, 512\}$;
\item 
The mini-batch size was chosen uniformly from $\{512, 1024\}$;
\item
The learning rate of Adam for the encoder-decoder pair across all three models was sampled from the log-uniform distribution on $[10^{-5}, 10^{-2}]$;
\item
The learning rate of Adam for the adversary of WAE-GAN was sampled from the log-uniform distribution on $[10^{-5}, 10^{-1}]$;
\item
For WAE-MMD we used a sum of inverse multiquadratics kernels at various scales: 
\[
k(x, y) = \sum_{s\in\mathcal{S}}\frac{s\cdot C}{s \cdot C + \|x - y\|_2^2},
\]
where $\mathcal{S}=\{0.1, 0.2, 0.5, 1, 2, 5, 10\}$ was kept fixed and the \emph{base scale} $C$ was sampled uniformly from $[0.1, 16]$;
\item
The regularization coefficient $\lambda$ for WAE-GAN and WAE-MMD was sampled uniformly from $[10^{-3}, 10^3]$;
\item
For VAE $2\sigma^2_G$ effectively plays the role of the regularization coefficient $\lambda$. 
We sample the value of $\sigma^2_G$ uniformly from $[0, 8]$;
\item
For both encoder $\mu_\phi\colon \X\to\Z$ and decoder $G_\theta\colon \Z\to\X$ mappings we use either DCGAN-style convolutional DNN architectures reported in the previous sections or the ResNet50-v2 \cite{He2016} architecture;
\item
For WAE-MMD and WAE-GAN we uniformly choose either deterministic encoder mapping (like the one used in the previous sections) or stochastic Gaussian one with a diagonal covariance (like in VAE).
\end{itemize}

\paragraph{Results}
For each model (hyperparameter configuration) we (i) saved several intermediate checkpoints, (ii) evaluated all of them using the FID score as described in the previous sections (based on 10\,000 samples), and (iii) selected the checkpoint with the lowest FID score.
The figures below display some resulting statistics of these models.

\begin{center}
\includegraphics[width=\linewidth]{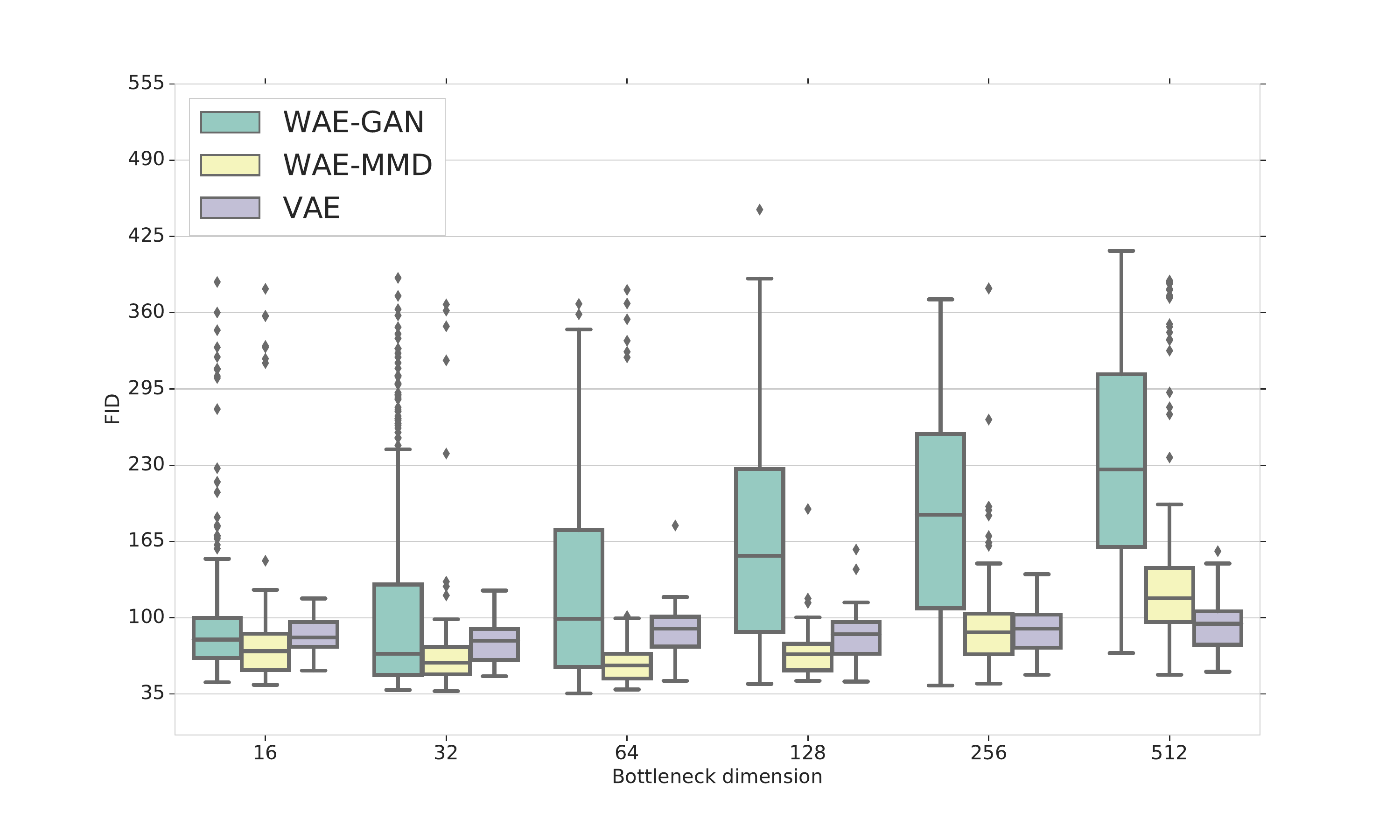}

\includegraphics[width=.45\linewidth]{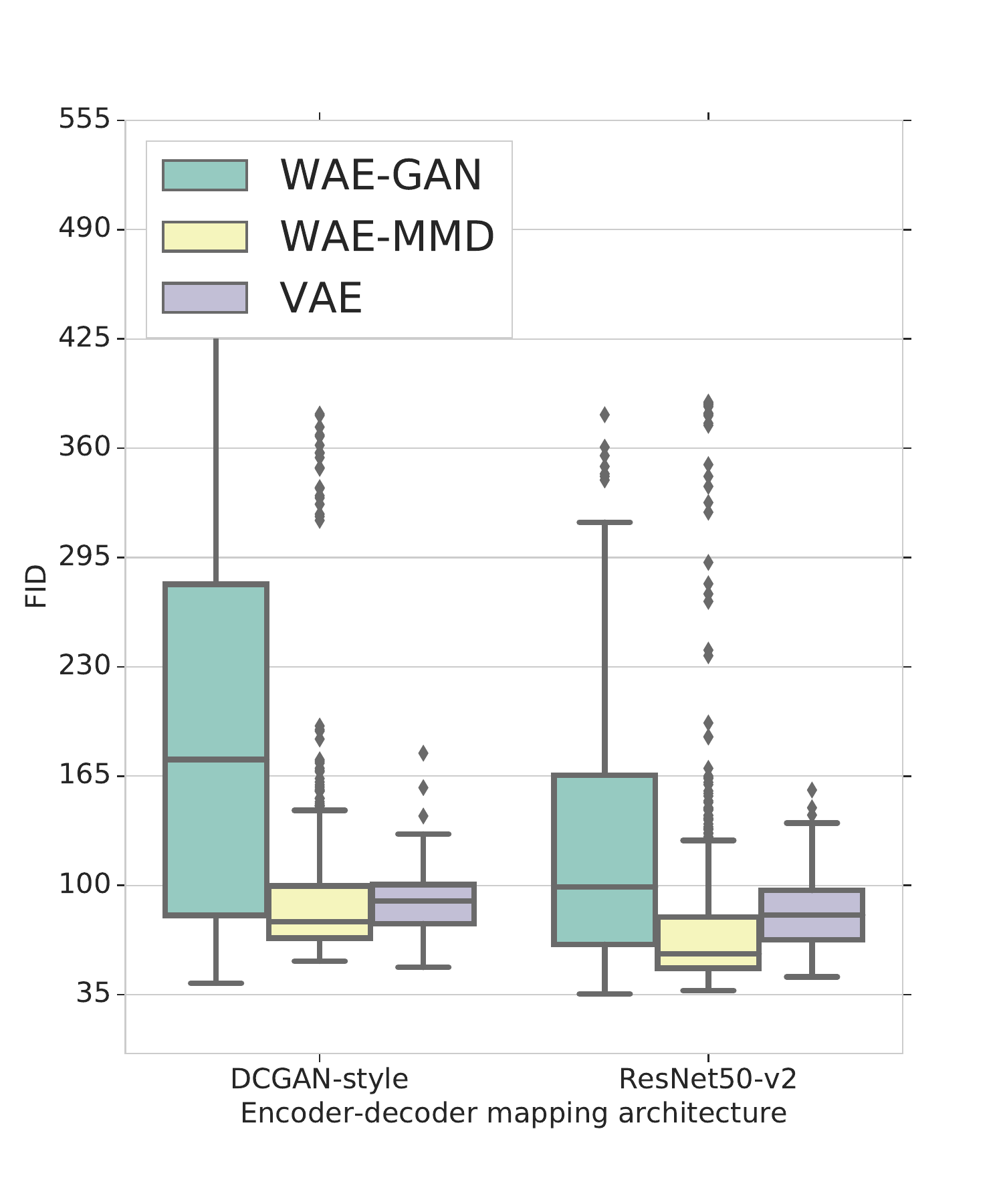}
\includegraphics[width=.45\linewidth]{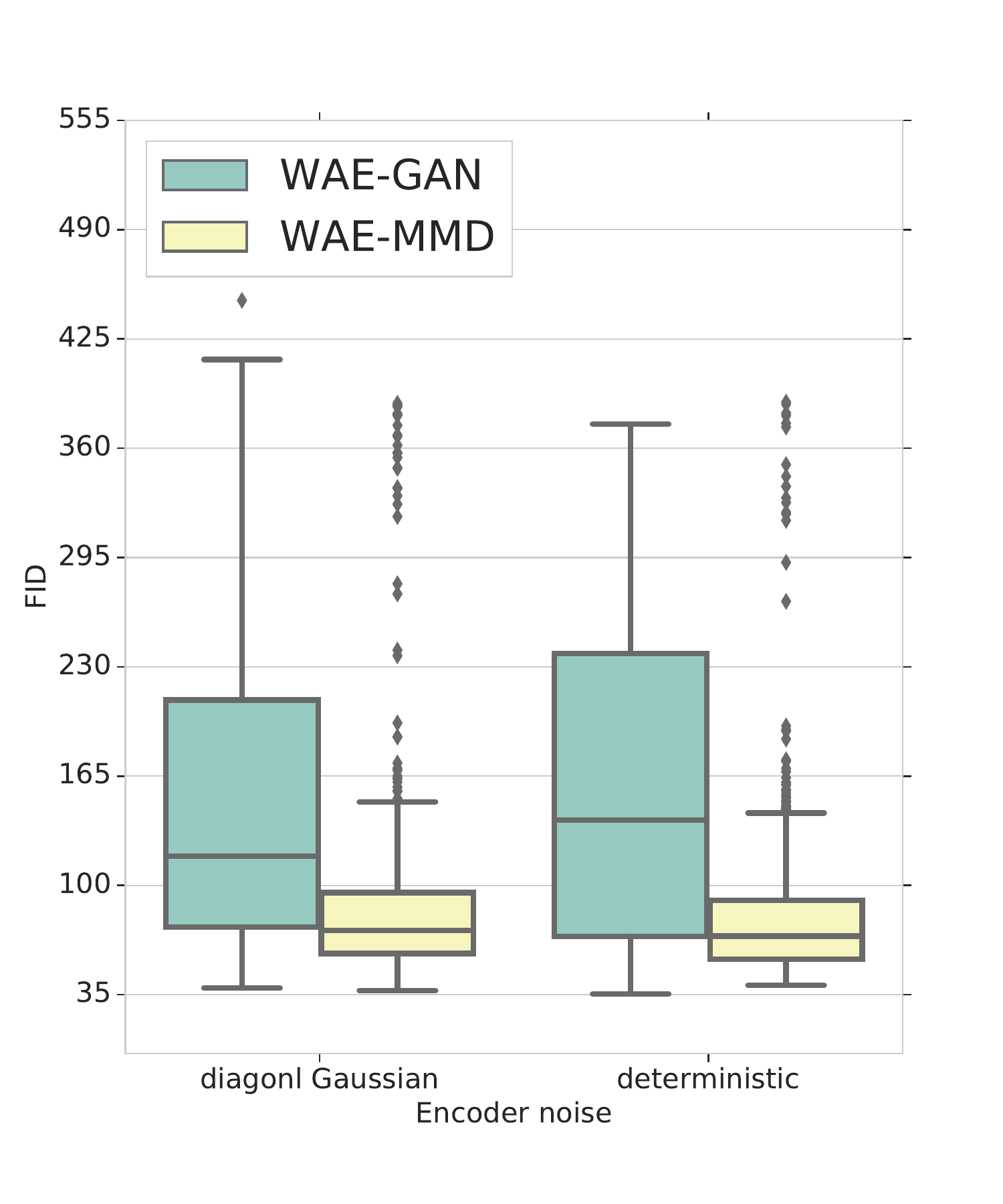}
\end{center}


\end{document}